\documentclass[11pt,twocolumn]{article}

\usepackage[numbers]{natbib}

\usepackage{amsmath,amsthm,amssymb,amsfonts}
\usepackage{bm}

\usepackage{booktabs} 
\usepackage{threeparttable}
\usepackage{makecell}
\usepackage{multirow}
\usepackage{placeins}

\usepackage{lmodern}
\usepackage{subcaption}
\usepackage{geometry}
 \usepackage[nice]{nicefrac}
\usepackage{mathabx}

\newcommand{\todo}[2][]{ }

\newcommand{\jj}[1]{ }
\newcommand{\me}[1]{ }
\newcommand{\dovid}[1]{ }
\newcommand{\uri}[1]{ }
\usepackage{tcolorbox}
\usepackage{pifont}
\definecolor{mydarkgreen}{RGB}{39,130,67}
\definecolor{mydarkred}{RGB}{192,25,25}
\definecolor{mydarkblue}{RGB}{0,0,140}

 \usepackage{empheq}
 \usepackage{color}
\definecolor{darkgreen}{rgb}{0.00,0.5,0.00}

\usepackage{hyperref}
\hypersetup{
    pagebackref=true,
    backref=true
	colorlinks=true,        
	linkcolor=blue,         
	citecolor=mydarkblue,         
	urlcolor=blue           
}

\usepackage{natbib}
\usepackage{tcolorbox}
\usepackage{framed}
\usepackage{algorithm}
\usepackage{algpseudocode}
\usepackage{enumerate}
\usepackage{cleveref}
\newtheorem{theorem}{Theorem}
\newtheorem{assumption}{Assumption}
\newtheorem{definition}{Definition}
\newtheorem{lemma}{Lemma}
\newtheorem{proposition}{Proposition}

\crefname{assumption}{assumption}{assumptions}
\usepackage{graphicx}
\usepackage{float}

\theoremstyle{definition}

\usepackage{sidecap}

\providecommand{\jj}{\mathbf{j}}

\newcommand{\CPTE}{{\mathrm{CPTE}}}

\newcommand{\indep}{\perp \!\!\! \perp}

\newcommand{\cY}{\mathcal{Y}}

\newcommand{\cI}{\mathcal{I}}

\newcommand{\cO}{\mathcal O}

\newcommand{\R}{\mathbb R}

\newcommand{\eqdef}{\stackrel{\text{def}}{=}}

\def\<#1,#2>{\langle #1,#2\rangle}

\usepackage{dsfont}

\renewcommand{\leq}{\leqslant}
\renewcommand{\geq}{\geqslant}
\renewcommand{\le}{\leqslant}

\def\<{\langle}
\def\>{\rangle}
\def\|{\Vert}

\def\eps{\varepsilon}
\def\var{{\rm var\,}}

\newcommand{\esp}[1]{\mathbb{E}\left[#1\right]}

\newcommand{\NRM}[1]{{{\left\| #1\right\|}}} 
\newcommand{\set}[1]{{{\left\{ #1\right\}}}} 
\newcommand{\proba}[1]{\mathbb{P}\left(#1\right)}

\renewcommand{\P}{\mathbb{P}}

\newcommand{\cN}{\mathcal{N}}
\newcommand{\cX}{\mathcal{X}}

\newcommand{\dd}{{\rm d}}

\newcommand{\cP}{\mathcal{P}}

\newcommand{\cQ}{\mathcal{Q}}

\newcommand{\cD}{\mathcal{D}}
\newcommand{\one}{\mathds{1}}

\usepackage{amsfonts}
\usepackage{amsthm}
\usepackage{amsmath}
\usepackage{amssymb}
\usepackage{mathrsfs}
\usepackage{amscd}
\usepackage{caption}
\usepackage{indentfirst}
\usepackage{cleveref}

\DeclareMathOperator*{\argmax}{argmax}

\newcommand{\cov}{{\rm cov}}

\title{Preference-based Conditional Treatment Effects and Policy Learning}
\author{%
Dovid Parnas\thanks{Equal contribution.}\\
Technion -- Israel Institute of Technology
\and
Mathieu Even\footnotemark[1]\\
Inria, Inserm, Universit\'e de Montpellier, France
\and
Julie Josse\\
Inria, Inserm, Universit\'e de Montpellier, France
\and
Uri Shalit\\
Department of Statistics and Operations Research, Tel Aviv University
}

\begin{document}
\maketitle

\begin{abstract}
We introduce a new preference-based framework for conditional treatment effect estimation and policy learning, built on the Conditional Preference-based Treatment Effect (CPTE). CPTE requires only that outcomes be ranked under a preference rule, unlocking flexible modeling of heterogeneous effects with multivariate, ordinal, or preference-driven outcomes. This unifies applications such as conditional probability of necessity and sufficiency, conditional Win Ratio, and Generalized Pairwise Comparisons. Despite the intrinsic non-identifiability of comparison-based estimands, CPTE provides interpretable targets and delivers new identifiability conditions for previous unidentifiable estimands. We present estimation strategies via matching, quantile, and distributional regression, and further design efficient influence-function estimators to correct plug-in bias and maximize policy value. Synthetic and semi-synthetic experiments demonstrate clear performance gains and practical impact.

\end{abstract}






\section{Introduction}




In high-stakes domains, such as healthcare and public policy, clinicians and policymakers must make decisions about treatments whose effects can vary widely between individuals. Population-level averages are rarely enough: the right choice for each person depends on their specific characteristics, risk factors and likely response. To truly personalize data-driven decisions, we must estimate individual effects and learn feature-dependent policies.

The standard approach estimates conditional treatment effects, contrasting conditional outcomes for each treatment arm, and derives policies accordingly. However, while the conditional average treatment effect (CATE) has become the default tool for heterogeneous effect estimation and policy learning, it can be misleading. CATE-based policies rely on averages, ignoring complex treatment responses. For example, a heavy tailed distribution can make CATE-based policies favor treatments where the majority actually loses.
Consider salary as an outcome and an intervention that causes small monthly losses for many low-salary individuals but large gains for a handful of high-salary individuals. 
A perfectly estimated CATE would recommend the intervention, despite diminishing the welfare of the majority. In critical settings, this bias is not just a statistical artifact, it threatens fair, effective decision-making.

Moreover, defining a CATE policy is not straightforward for multiple outcomes. Consider heart-failure studies \citep{Pocock2011winratio} where death, stroke, and hospitalization are all outcomes of primary interest. These outcomes follow a natural hierarchy: death being the most severe, followed by stroke and then hospitalization. The challenge lies in incorporating this hierarchy into a principled, data-driven framework for decision making.
A leading approach over the past decade is the \textit{Win Ratio} \citep{Pocock2011winratio}, where treated and control patients are paired and compared sequentially. Death is assessed first; stroke is compared if both survived; if neither patient experienced a stroke, hospitalization is evaluated. This design explicitly respects clinical priorities.
However, quantifying treatment effects, conditional effects, and learning policies when outcomes can only be judged through comparisons remains highly non-trivial \citep{mueller2023personalized,even2025rethinking,tian2000probabilities}. They cannot be captured by CATE. The core difficulty is the non-identifiability of the underlying estimands; it is impossible to estimate them from observable data.
due to the \textit{fundamental problem of causal inference} \citep{Holland1986}. Indeed, access to the \textit{joint} potential outcomes distribution is required to compare counterfactuals. However, no unit can ever be observed in both the treatment and control groups, even in randomized studies.

This paper introduces a general framework for learning preference-based conditional treatment effects and policies from both randomized and observational data. Preference-based learning directly answers the question, "Which treatment should be preferred?" It also expands policy treatment effect estimation and policy learning to outcomes that are comparable but not quantifiable.
A first example is the probability of necessity and sufficiency (PNS), a classic causal quantity \citep{mueller2023personalized,tian2000probabilities}. PNS can be interpreted as the probability that the treated outcome is strictly preferred to the control outcome for a given individual, averaged across the population. Although related directions have been explored \citep{wang2025estimating,wang2025learning,shingaki2021identification}, it lacks an identifiable, conditional construct tailored to personalized decision-making.

More broadly, preference-based effects extend naturally to complex, multivariate outcomes whenever a rule specifies which outcome is preferred in a pairwise comparison. This framework subsumes hierarchical outcome settings such as the Win Ratio and Generalized Pairwise Comparisons \citep{Pocock2011winratio,Buyse2010}. Our method goes beyond these cases, learning conditional effects and optimal policies under such preference structures, while also accommodating more general, possibly non-strict, preference rules.

\textbf{Related work.}
Most work on conditional treatment effects and policy learning in causal inference focuses on single real-valued or binary outcomes \citep{AtheyImbens2016,WagerAtheyTibshirani2019,kunzel2019metalearners,athey2021policy}.
This paper offers a broader framework, connecting three related lines of work: \textit{(i)} non-identifiable causal quantities such as the PNS and their bounds, \textit{(ii)} pairwise comparison methods like the Win Ratio and Net Benefit, and \textit{(iii)} distributional regression and quantile treatment effects.

The \textit{Probability of Necessity and Sufficiency} (\textit{PNS}) is the probability that treatment improves the outcome and is required to do so \citep{mueller2023personalized,tian2000probabilities}. It can be viewed as the population average of individual-level effects, with subpopulation extensions studied in \citet{kawakami2024probabilities,wang2025estimating}. \citet{lei2024causal} presents \textit{population}-level point estimates of PNS. However, to our knowledge, a clear definition of an identifiable, conditional PNS is missing.

Because PNS and its conditional variants are fundamentally \textit{non-identifiable}, most prior work has focused on deriving identifiable \textit{bounds} from data under suitable assumptions \citep{shingaki2021identification,robins2010alternative,tian2000probabilities}. Bounding strategies have been developed for binary \citep{mueller2023personalized}, ordinal \citep{huang2017inequality,lu2018treatment, li2024probabilities, de2025probability}, continuous \citep{manski2003partial}, quantile treatment effects (QTE) \citep{koenker1978regression,firpo2007efficient} and conditional QTE \citep{abadie2002instrumental,firpo2007efficient,cui2023policy}.
Instead of relying on identifiable bounds, which can be loose for conditional quantities or non-binary outcomes \citep{li2024probabilities,manski2003partial}, we introduce and estimate an interpretable, identifiable proxy.

\citet{Pocock2011winratio} and \citet{Buyse2010} introduced the \textit{Win Ratio} and \textit{Net Benefit} to compare two treatments over multiple, \textit{hierarchical} outcomes. These metrics allow several outcomes to factor into the evaluation of treatment effects and decision making.
\citet{petit2025optimal} considered optimal treatment rules for the net benefit. However, our broader framework goes further by defining conditional treatment effect with hierarchical outcomes, enabling policy value optimization and developing semi-parametric efficient estimators.
Our work borrows the formalism of \citep{even2025rethinking} and non-causal Probabilistic Index Models (PIMs, \citep{thas2012probabilistic}).


Our estimation methods are based on nearest neighbor approaches \citep{biau2015lectures} and off-the-shelf quantile regression methods \citep{Koenker1978,greene2003econometric,meinshausen2006quantile} for nuisance parameters. Our methods differ from distributional treatment effect approaches \citep{naf2024causal,zhou2022estimating,pmlr-v139-park21c,kallus2023robust}: we use distribution methods \textit{as a means} to estimate preference-based effects.

\textbf{Outline of the paper and contributions.}
We first introduce the causal and preference-based setting in \Cref{sec:setting} and define our target estimands in \Cref{sec:conditional_effects}. We distinguish between a non-identifiable conditional Individual Treatment Effect (ITE) and our new identifiable estimand, the Conditional Preference Treatment Effect (CPTE), and establish conditions under which they coincide via the concept of \textit{structural treatment effect modifiers}.
In \Cref{sec:CPTE_estimation}, we develop estimation strategies for CPTE, using flexible building blocks like matching and distributional regression. In \Cref{sec:policy}, we turn to preference-based policy learning, introducing a plug-in method and a policy value optimization approach. Finally, in \Cref{sec:semi_param_efficient}, we derive a semiparametric efficient policy estimator based on a one-step efficient influence function (EIF) correction.
We validate these methods on synthetic and semi-synthetic experiments (\Cref{sec:exps}, \Cref{app:exps}).
Our contribution is threefold, enabling practical policy learning within a preference-based framework. We build an identifiable conditional proxy, the CPTE, for individual-level preference and establish its identifiability conditions. We develop an estimation procedure for the CPTE. Finally, we learn decision policies based on preference rules. 

\section{Setting}\label{sec:setting}

\textbf{Causal assumptions.}
We assume access to $n$ i.i.d.\ samples, associated with feature vectors $X_i\in\cX$, treatment $T_i\in\{0,1\}$, and observed outcome $Y_i\in\cY$ (possibly multivariate, e.g.\ $\cY\subset\R^d$). Under the \emph{potential outcome} framework \citep{splawa1990application}, each unit has two outcomes, $Y_i(0),Y_i(1)\in\cY$, only one of which is observed.

We assume \textit{SUTVA}, \emph{unconfoundedness} and \emph{positivity}:
\begin{assumption}[SUTVA]\label{hyp:sutva}
$Y_i = Y_i(T_i)$ for all $i\in[n]$.
\end{assumption}
\begin{assumption}[unconfoundedness]\label{hyp:unconfoundedness}
$\{Y_i(0),Y_i(1)\} \perp T_i \mid X_i$.
\end{assumption}

\begin{assumption}\label{hyp:positivity}
There exists $\eta\in(0,1)$ such that $\eta \le e(x) \le 1-\eta$ for all $x\in\cX$,
where $e(x) = \proba{T_i=1 \mid X_i=x}$.
\end{assumption}

\textbf{Preference-based framework.}
We ask whether a unit would fare better under treatment than without. Given two outcomes $y,y'\in\cY$, ``better'' cannot always be quantified correctly with the sign and magnitude of their linear difference $y-y'$. Instead, we consider a more general ``preference function'':

\begin{definition}[Preference function]\label{win_function_def}
$w:(y,y')\in\cY^2 \mapsto w(y|y') \in \R$
quantifies how $y$ fares against $y'$.
\end{definition}

Typical examples of preference functions are: \\
\textit{(i)} \textit{PNS}, For single real-valued outcomes:
$$
w(y|y')=\mathbf 1\{y>y'\}\,.
$$
\textit{(ii)} \textit{General Ordered Outcomes} like the Win Ratio:
\begin{equation}
    \label{eq:win_lexico}
    w(y|y')= \one_\set{y\succ y'} + 0.5\cdot\one_\set{y\sim y'}\,,
\end{equation}
where $\succ$ is an order on $\cY$, such as the \textit{lexicographic order} on $\R^d$:
$y\succ y'$ iff $\inf\{k:y_k>y_k'\}<\inf\{k:y_k<y_k'\}$ used for hierarchical outcomes \citep{Pocock2011winratio}.

\textit{(iii)} \textit{Unknown preferences}: If an “ideal stochastic oracle’’ compares the outcomes $Y$ and $Y'$ of two units,
$$
w(Y|Y')\eqdef \P(Y\succ Y')+\tfrac12\P(Y\sim Y'),
$$
where probabilities capture uncertainty or individual preferences. In this paper, $w$ is assumed to be known.


\section{Conditional Effects}
\label{sec:conditional_effects}

\subsection{Definition of a preference-based conditional treatment effect}
\label{sec:CPTE_def}

Personalized treatment is the holy grail of causal inference. While population-level effects inform broad policy decisions, they overlook the substantial heterogeneity that individualized policies can account for.
However, the Individual Treatment Effect (ITE) is inaccessible due to the \textit{fundamental problem of causal inference}.  
In our preference-based framework with preference function $w$, the ITE for patient $i$ is:
\begin{equation*}
    \Delta_i = w(Y_i(1)|Y_i(0))\,.
\end{equation*}
It measures how $Y_i(1)$ compares to $Y_i(0)$; treatment follows $\Delta_i$.  
Because only one action is observed, the ITE is in general unidentifiable and cannot be estimated from data. Common surrogates for ITE are \textit{conditional treatment effects}, which in many cases can be estimated from data.
Conditional effects answer subgroup causal questions and support policy learning \citep{athey2021policy}, ideally balancing identifiability and the ability to tailor treatments to specific populations. 
For the risk difference (RD) $w(y|y')=y-y'$, the widely used Conditional Average Treatment Effect (CATE) is exactly the conditional mean of the ITE:
\[
\tau_\mathrm{RD}(x)=\esp{Y_i(1)-Y_i(0)|X_i=x}\,.
\]
It is identifiable under \Cref{hyp:positivity,hyp:sutva,hyp:unconfoundedness} and estimable via a variety of methods \citep{nie2021quasi,wager2018estimation,kunzel2019metalearners}.  
Unfortunately, for the more generalized preference functions considered in our paper, we have the following unidentifiability result.

\begin{lemma}\label{lem:unidentifiable}
The conditional ITE, defined as: \begin{equation}\label{eq:unidentifiable_CPTE}
    \esp{\Delta_i|X_i=x} = \esp{w(Y_i(1)|Y_i(0))\,|\,X_i=x}\,,
\end{equation}
 is generally unidentifiable and cannot be inferred from the distribution of the observations $(X_i,A_i,Y_i)_{i\in[n]}$.
\end{lemma}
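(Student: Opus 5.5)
The plan is a counterexample: build two data-generating processes that induce the same distribution of the observables $(X_i,T_i,Y_i)$ but give different values of $\esp{w(Y_i(1)|Y_i(0))|X_i=x}$. Under \Cref{hyp:sutva,hyp:unconfoundedness,hyp:positivity}, the observed law is determined by three objects: the law of $X_i$, the propensity $e(x)$, and the two \emph{marginal} conditional laws $Y_i(0)\mid X_i=x$ and $Y_i(1)\mid X_i=x$. Indeed, $\P(Y_i\in A\mid X_i=x,T_i=t)=\P(Y_i(t)\in A\mid X_i=x)$. The \emph{coupling} (joint conditional law) of $(Y_i(0),Y_i(1))$ given $X_i$ is left completely free. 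The estimand \eqref{eq:unidentifiable_CPTE}, however, is the integral of $w$ against this coupling. It therefore suffices to show that, for a typical $w$, two couplings with the same marginals can give different values of $\int w(y|y')\,d\pi(y',y)$.

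The concrete example is the PNS function $w(y|y')=\one_\set{y>y'}$, also covered by \eqref{eq:win_lexico} with the usual order on $\R$. Take $X_i$ degenerate (or any $X_i$ with the construction applied at each $x$), $T_i\sim\mathrm{Bernoulli}(1/2)$ independent of everything, and both potential outcomes marginally $\mathrm{Bernoulli}(1/2)$. I will compare two models.
\begin{itemize}
\item In Model A, $Y_i(1)=Y_i(0)$ (comonotone coupling), so $\esp{\Delta_i|X_i=x}=0$.
\item In Model B, $Y_i(1)=1-Y_i(0)$ (antitone coupling), so $\esp{\Delta_i|X_i=x}=\P(Y_i(1)=1,Y_i(0)=0)=1/2$.
\end{itemize}
Both models satisfy the three assumptions, since $T_i$ is independent of the pair and $e\equiv 1/2$. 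Both yield $(X_i,T_i,Y_i)$ with $Y_i\mid T_i\sim\mathrm{Bernoulli}(1/2)$, so the observed laws are identical and hence so are the laws of the i.i.d.\ samples. No functional of the observed distribution can equal both $0$ and $1/2$, which gives unidentifiability. For a real-valued continuous analogue, I can also exhibit $Y(1)=Y(0)+\delta$ against an antitone coupling with the same shifted marginal.

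To justify "generally" beyond this example, I would add a remark. The value $\int w\,d\pi$ is constant over all couplings $\pi\in\Pi(P_0,P_1)$ exactly when $w$ is additively separable, $w(y|y')=f(y)+g(y')$, as in the risk-difference case, where it reduces to the identifiable CATE $\tau_\mathrm{RD}$. For non-separable $w$, the map $\pi\mapsto\int w\,d\pi$ is generally non-constant on the Fréchet class.

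The main obstacle is bookkeeping rather than depth: I must check carefully that the observed distribution depends only on the marginals of the potential outcomes. This follows from SUTVA together with unconfoundedness by conditioning on $(X_i,T_i)$. I will state this factorization explicitly before presenting the two models.
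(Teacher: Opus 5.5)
Your proof is correct and follows essentially the same route as the paper. Both use Bernoulli$(1/2)$ potential outcomes and exhibit two couplings with identical observed laws but different values of $\esp{w(Y_i(1)|Y_i(0))|X_i=x}$. The paper contrasts the comonotone coupling (value $0$) with the independent coupling (value $1/4$), whereas you use the antitone coupling (value $1/2$); this difference is immaterial, and your explicit check that the observed law depends only on the marginals fills in a step the paper merely asserts.
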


This stems from the non-separability of $w$, e.g. $w(y|y')\neq g(y)-g(y')$, see proof in \Cref{app:proof_non_ident}.
When $w$ is separable and can be written as $w(y|y')= g(y)-g(y')$ for some function $g$, this quantity is the classical CATE with $g(Y_i)$ as outcome, and is identifiable. Note that the non-identifiability in \Cref{lem:unidentifiable} does \textit{not} stem from confounding and holds even in RCT settings.
\citet{mueller2023personalized,cui2023policy,li2024probabilities} explore identifiable bounds on the conditional ITE for the PNS and QTE.

We propose overcoming the non-identifiability of the non-separable, conditional ITE by defining an \textit{identifiable} conditional effect.
We call our new estimand the Conditional Preference Treatment Effect (CPTE). It is obtained by conditioning on covariates and replacing the joint distribution of counterfactuals, $(Y_i(1),Y_i(0))|X_i=x$, with $(Y_i(1),Y_j(0))|(X_i,X_j)=(x,x)$, drawing inspiration from the formalism of probabilistic index models \citep{thas2012probabilistic}.
In the next subsection, we introduce identifiability conditions under which the CPTE will be equal to the Conditional ITE. 

\begin{definition}[Preference-based conditional effect]
\label{def:CATE}
For $x\in\cX$, let $\set{Y(0,x),Y(1, x)}$ be an independent copy of $\set{Y_i(0),Y_i(1)}|X_i=x$.  
Define
\begin{equation}\label{eq:cpte}
    \begin{aligned}
    	\CPTE(x)&\eqdef\esp{w(Y(1, x)|Y_i(0))\,|\,X_i=x}\\
        &=\esp{w(Y_i(1)|Y(0,x)\,|\,X_i=x}\,,
    \end{aligned} 
\end{equation}
the Conditional Preference Treatment Effect.  
Equivalently, for $i\ne j$,
\[
\CPTE(x)=\esp{w(Y_i(1)|Y_j(0))|(X_i,X_j)=(x,x)}\,,
\]
where the expectation is over the products of the probability distributions $Y_i(1)|X_i=x$ and $Y_j(0)|X_j=x$.
\end{definition}

Under \Cref{hyp:positivity,hyp:sutva,hyp:unconfoundedness},
\[
\CPTE(x)=
\esp{w(Y_i|Y_j)\,|\,X_i=X_j=x,\,T_i=1,T_j=0}\,
\]
Therefore, it is \textit{identifiable}.  
A population effect can be defined as
$\esp{\CPTE(X_i)}$, which is \textit{directly collapsible} (written as the population average of conditional effects \citep{even2025rethinking,colnet2023risk}).
Note that because an independent copy always exists, we do not require further assumptions for the definition of $\CPTE(x)$, unlike the net benefit as defined by \citet{petit2025optimal}.
Finally, we define treatment effect modifiers as follows:
\begin{definition}\label{def:effect_modif}
A variable $X^k$ is a treatment effect modifier if $u\mapsto \esp{\CPTE(X_i)|X_{i}^k=u, X_i^{-k}=v}$ is not constant when conditioning on other features $X^{-k}$.
Effects are homogeneous when no modifier exists, and heterogeneous otherwise.
\end{definition}

Treatment-effect modifiers depend on the effect measure $w$ and may differ from those of the classical risk difference. 
Next, we explain how our CPTE relates to the non-identifiable Conditional ITE $\esp{\Delta_i|X_i=x}$.
In particular, the next subsection provides new identifiability conditions under which $\CPTE(x)=\esp{\Delta_i|X_i=x}$, as well as tools to assess how far $\CPTE(x)$ is from the non-identifiable Conditional ITE when these conditions do not hold.\\

\subsection{Properties of $\CPTE(x)$ and dependencies over features}
\label{sec:CPTE_properties}

\textit{Which covariates should be included in $X_i$?}
In order for $\CPTE(x)$ to be identifiable, the classic causal assumptions (\Cref{hyp:positivity,hyp:sutva,hyp:unconfoundedness}) must hold.
In particular, \Cref{hyp:unconfoundedness} requires that all confounders must be included as covariates.
\textit{Treatment effect modifiers} (\Cref{def:effect_modif}) are features that have an influence on the treatment effect. However, they are not necessarily confounders, that is, variables that also influence treatment assignment. Nevertheless, $\CPTE(x)$ requires them to capture population heterogeneity, as is often the case for conditional effects.

To achieve identifiability of the Conditional ITE, defined in \Cref{eq:unidentifiable_CPTE}, \textit{via} the equality $\CPTE(x)=\esp{\Delta_i|X_i=x}$, treatment effect modifiers and confounders are not enough, as highlighted in \Cref{lem:unidentifiable}.
Bearing this in mind, we introduce the notion of \textit{structural treatment effect modifiers}. These are all the variables that could make the outcome of a treated patient better or worse than that of a control, from the set of variables $Z_i$ that spans all endogenous and exogenous variables in the structural causal model, as introduced by \citet[Chapter 1]{Pearl2009-av}.
This definition differs from that of \Cref{def:effect_modif}, which only considers \textit{observed} variables.

\begin{definition}[Structural treatment effect modifiers]
\label{def:structural_treatment_effect_modifiers}
    Structural treatment effect modifiers are a minimal set of variables $\cI\subset[p]$ for which there exists a non-constant function $g$ such that for all $i,j\in[n]$ and all $x\in\R^\cI$:
    \begin{equation*}
        \esp{w(Y_i(1),Y_j(0))\,|\, Z_i,Z_j,(Z_i^\cI,Z_j^\cI)=(x,x)}=g(x)\,,
    \end{equation*}
    where $z^\cI=(z_k)_{k\in\cI}$ for $z\in\R^p$.
\end{definition}

As the next lemma shows, if all structural treatment effect modifiers are included in the covariates, then we have equality between $\CPTE(x)$ and $\esp{w(Y_i(1)|Y_i(0))|X_i=x}$. 
However, if this does not hold, $\CPTE(x)$ can function as an identifiable proxy for $\esp{w(Y_i(1)|Y_i(0))|X_i=x}$, requiring \textit{sensitivity analyses} to help assess how far we are from equality and identifiability \citep{nguyen2018sensitivity,colnet2022causal,jin2023sensitivity}.
\begin{theorem}
\label{lem:treatment_effect_modifiers}
    Assume that structural treatment effect modifiers in the sense of \Cref{def:structural_treatment_effect_modifiers} are included in the set of features $X_i$. Then, $\CPTE(x)=\esp{w(Y_i(1)|Y_i(0))|X_i=x}$ for all $x\in\cX$.
\end{theorem}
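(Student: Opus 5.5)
The plan is to condition everything on the full structural variables $Z$ and then integrate out the non-structural parts, using the defining property of $\cI$ as the bridge between the same-unit quantity and the cross-unit quantity. In the structural causal model, $(Y_i(0),Y_i(1))$ is a deterministic (or conditionally independent given $Z_i$) function of $Z_i$. Since $X_i$ is observed and among the variables of the model, we regard $X_i$ as a sub-vector of $Z_i$. By hypothesis $X_i \supseteq Z_i^\cI$, so conditioning on $X_i=x$ fixes $Z_i^\cI=x^\cI$.

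\textbf{Step 1: the identifiable side.} Write $\CPTE(x)=\esp{w(Y_i(1)|Y_j(0))\mid X_i=X_j=x}$ for $i\ne j$ with independent units, as in \Cref{def:CATE}. By the tower property, this equals
$\esp{\,\esp{w(Y_i(1)|Y_j(0))\mid Z_i,Z_j}\mid X_i=X_j=x}$.
On the event $X_i=X_j=x$ we have $Z_i^\cI=Z_j^\cI=x^\cI$. Hence, by \Cref{def:structural_treatment_effect_modifiers}, the inner expectation equals $g(x^\cI)$ almost surely, and so $\CPTE(x)=g(x^\cI)$.

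\textbf{Step 2: the same-unit side.} Similarly, $\esp{w(Y_i(1)|Y_i(0))\mid X_i=x}=\esp{\,\esp{w(Y_i(1)|Y_i(0))\mid Z_i}\mid X_i=x}$. We then need $\esp{w(Y_i(1)|Y_i(0))\mid Z_i}=g(Z_i^\cI)$. We read \Cref{def:structural_treatment_effect_modifiers}, which is stated ``for all $i,j\in[n]$'', as including the diagonal case $i=j$, where the pair $(Z_i,Z_i)$ trivially satisfies $Z_i^\cI=Z_j^\cI$. This gives $\esp{w(Y_i(1)|Y_i(0))\mid Z_i}=g(Z_i^\cI)=g(x^\cI)$ on $\{X_i=x\}$. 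Combining with Step 1 yields the claimed equality for every $x\in\cX$.

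\textbf{Main obstacle.} The delicate point is Step 2: justifying that the definition's constancy across pairs legitimately covers the same unit $i=j$. This matters because for $i\ne j$ the counterfactuals are independent given $(Z_i,Z_j)$, whereas for $i=j$ they share exogenous noise. If the diagonal reading were considered too strong, I would argue instead as follows. Since all exogenous variables belong to $Z$, $Y_i(1)$ and $Y_i(0)$ are deterministic functions $f_1(Z_i)$ and $f_0(Z_i)$. For $j$ with $Z_j=Z_i$ we then get $Y_j(0)=Y_i(0)$, so the cross-unit conditional expectation evaluated at $Z_j=Z_i$ coincides with the same-unit one. This makes the function $g$ of the definition applicable on the diagonal. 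One must also check that the definition's conditioning is understood via regular conditional distributions, so that it can be evaluated at $Z_j=Z_i$. I expect to state this measurability and regularity point briefly rather than prove it in full.
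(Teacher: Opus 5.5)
Your proof is correct and is essentially the paper's own argument: the tower property over $(Z_i,Z_j)$ for independent units gives $\CPTE(x)=g(x)$, and the diagonal case $i=j$ of \Cref{def:structural_treatment_effect_modifiers}, which the paper also invokes explicitly, gives the same value for $\esp{w(Y_i(1)|Y_i(0))|X_i=x}$. One caveat: your fallback would not rescue the result without that diagonal reading, because for $i\neq j$ the defining identity holds only almost surely under the product law of $(Z_i,Z_j)$, under which $\{Z_j=Z_i\}$ is typically a null set (e.g.\ $w(y|y')=y_1\one_{\{y_2\neq y_2'\}}$ with $Y(1)=Y(0)=(X,U)$ and $U$ continuous gives $g(x)=x$ for $i\neq j$ but a same-unit value of $0$), so the $i=j$ case is genuinely needed.
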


In summary, features $X_i$ need to include \textit{confounding variables} for identifiability of the CPTE (in the case of an observational study), \textit{treatment effect modifiers} to capture treatment effect heterogeneity, and \textit{structural treatment effect modifiers} for the equality in \Cref{lem:treatment_effect_modifiers} to hold. However, it might not always be possible to include, or even know, all structural treatment effect modifiers. For example, in the medical context, these can come from genetic factors or other unmeasured sources of variation. Yet, as we show in the experiments below, even in such cases the CPTE can be a useful estimand.
We note that another assumption which leads to the same identifiability result is \textit{potential independence}, namely that $Y_i(1)\indep Y_i(0)|X_i$ for all $i$.
This assumption, stating that the potential outcome under control tells nothing about the potential outcome under treatment, is unlikely to hold.\\
A presentation of the above definitions and Theorem \ref{lem:treatment_effect_modifiers} in SCM terminology is available in the Appendix \ref{app:SCM}.


\subsection{Examples}
\label{sec:application_examples}
\textbf{Preference conditional PNS.}
For $\cY\subset \R$ and $w(y|y')=\one_\set{y>y'}$, we obtain a conditional and identifiable PNS that we call \textit{preference conditional PNS}. This contrasts with the classical PNS \citep{tian2000probabilities} and its conditional counterparts, which are unidentifiable.
Conditioning on covariates and comparing treated and untreated outcomes across independent copies yields an identifiable estimand, preserving the causal meaning of the PNS and enabling personalized policies.
With \textit{binary outcomes}, since a Bernoulli distribution is determined by its mean,
$\CPTE(x)$ is a direct function of $\esp{Y_i|X_i=x,A_i=a}$, estimable via standard machine learning.
However, for general real-valued outcomes or for the non-identifiable PNS, no closed form exists. In \Cref{sec:CPTE_estimation} we show how distributional regression methods can be used to estimate the CPTE in this case.

\textbf{Conditional effects with hierarchical outcomes.}
When multiple outcomes, $\cY$, have a hierarchical order $\succ$ (e.g.\ $\cY\subset\R^d$ with a lexicographic order) and $w(y|y')=\one_\set{y\succ y'}$, treatment effects are often estimated by Win Ratio or pairwise comparisons. These usually target the population-level measure $\tau_\mathrm{pop}=\esp{w(Y_i(1)| Y_j(0))}$ \citep{Mao2017} for independent $i,j$. Such measures quantify the preference function of a treated unit against a control unit when both are sampled at random from the population distribution.
By conditioning on $(X_i,X_j)=(x,x)$, $\CPTE(x)$ becomes a personalized version, facilitating policy learning for complex outcomes. 

\textbf{Conditional effects with unknown preferences.}
When preferences are unknown, $w$ must be learned.  
A common approach posits a latent reward $R$ with $w(y|y')=\esp{\exp(R(y)-R(y'))}$, estimated from expert or participant feedback.  
One may also allow $R$ to depend on $X$, yielding unit-specific preferences.  
Then
$\CPTE(x)=\esp{e^{R(Y_i(1))}|X_i=x}\esp{e^{-R(Y_i(0))}|X_i=x},$
where $w$ (and $R$) is assumed learned.

\section{CPTE Estimation Methods}
\label{sec:CPTE_estimation}

We now turn to estimating $x\mapsto\CPTE(x)$ defined in \Cref{def:CATE}. CPTE captures more than first-moment information of the conditional distributions of the counterfactuals ($\esp{Y_i(t)|X_i=x}$). Therefore, our estimation methods learn the conditional distributions $Y_i(1)|X_i$ and $Y_i(0)|X_i$.
If we know how to sample from $\cP_{Y_i|T_i,X_i}$, a consistent and asymptotically normal estimator of $\CPTE(x)$ would be:
\begin{equation}\label{eq:gen_estimation}
    \frac{1}{m}\sum_{k=1}^m w(\hat y_1^{(k)},\hat y_0^{(k)})\,,
\end{equation}
where $(\hat y_t^{(k)})_k$ are independent random variables sampled from $\cP_{Y_i|T_i=t,X_i=x}$, with $(\hat y_1^{(k)})\indep (\hat y_0^{(k)})$. This will form the basis of our general approach. 
We offer several methods of utilizing the available data, $(X_i,T_i,Y_i)$, to sample from $\cP_{Y_i|T_i=t,X_i=x}$ and estimate \cref{eq:gen_estimation}.

\subsection{Matching Approach}
\label{sec:matching}

We first propose a matching approach, for which we prove consistency under \Cref{hyp:positivity,hyp:sutva,hyp:unconfoundedness} and an additional regularity assumption.
We use a $k-$nearest neighbor approach that yields non-parametric consistent estimators for $\CPTE(x)$.
Let $\cN_t\subset[n]$ be the set of patients with treatment assignment $T_i=t$. For $x\in\cX$ and $k\in[n]$, let the $k-$neighborhood of $x$ in arm $t\in\set{0,1}$ be denoted as $\cN_t(x,k)$, and defined as the minimizer of $\sum_{i\in \cI} \NRM{x-X_i}^2$ over all sets $\cI\subset\cN_t$ of size $k$.
Let, for any $x\in\cX$:
\begin{equation*}
    \hat q_W(x)\eqdef \frac{1}{k^2}\sum_{(i,j)\in\cN_1(x,k)\times \cN_0(x,k)}w(Y_i|Y_j)\,,
\end{equation*}
be the distributional $k-$NN estimator, satisfying the following:

\begin{theorem}\label{thm:knn}
	Assume that \Cref{hyp:positivity,hyp:sutva,hyp:unconfoundedness} hold.
	If the function $x,x'\mapsto \esp{w(Y_i(1)|Y_j(0))|X_i=x,X_j=x'}$ is continuous, for all $x\in\cX$, then in probability:
	\begin{equation*}
		\hat q_W(x)\longrightarrow q_W\eqdef \CPTE(x)\,,
	\end{equation*}
	as long as $k\to\infty$ and $\frac{k\log(n)}{n}\to 0$, where $q_W$ stands for the CPTE.
\end{theorem}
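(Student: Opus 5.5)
We decompose the error of $\hat q_W(x)$ into a bias and a variance term by conditioning on the covariates and treatments. Let $\mathcal F_n=\sigma\big((X_l,T_l)_{l\in[n]}\big)$ and define $m(x',x'')\eqdef\esp{w(Y_i(1)|Y_j(0))\mid X_i=x',X_j=x''}$. Under \Cref{hyp:sutva,hyp:unconfoundedness}, for $i\in\cN_1$ and $j\in\cN_0$ with $i\neq j$, we have $\esp{w(Y_i|Y_j)\mid\mathcal F_n}=m(X_i,X_j)$. This uses that, conditionally on $\mathcal F_n$, the outcomes are independent across units and $Y_i$ has the law of $Y_i(1)\mid X_i$. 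The neighborhoods $\cN_t(x,k)$ are $\mathcal F_n$-measurable. We then write
\[
\hat q_W(x)-\CPTE(x)=\underbrace{\hat q_W(x)-\esp{\hat q_W(x)\mid\mathcal F_n}}_{V_n}+\underbrace{\frac1{k^2}\sum_{i,j}\big(m(X_i,X_j)-m(x,x)\big)}_{B_n},
\]
using $m(x,x)=\CPTE(x)$ by \Cref{def:CATE}. We assume $w$ is bounded, or at least has a bounded conditional second moment; this is needed for the variance step, and we state it explicitly.

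\textbf{Bias term.} We show that the $k$-th nearest neighbor of $x$ within each arm converges to $x$ in probability, i.e. $\max_{i\in\cN_t(x,k)}\NRM{X_i-x}\to0$. This is the classical $k$-NN result (\citep{biau2015lectures}), applied within the subsample $\cN_t$. By \Cref{hyp:positivity}, $|\cN_t|\ge \eta n/2$ with high probability. Within arm $t$, the $X_i$ are i.i.d. with law $P_X(\cdot\mid T=t)$, whose density ratio to $P_X$ lies in $[\eta,1/\eta]$. Hence $x$ lies in the support of each arm's distribution whenever it lies in the support of $P_X$. Since $k/|\cN_t|\to0$, the neighbor radius tends to zero. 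The $\log n$ factor in the condition on $k$ leaves room for a union/Chernoff bound on the arm sizes. Continuity of $m$ at $(x,x)$ then gives $|B_n|\le\sup_{\NRM{x'-x},\NRM{x''-x}\le r_n}|m(x',x'')-m(x,x)|\to0$ in probability. If $m$ is only continuous and not bounded, we first restrict to the event where both radii are small.

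\textbf{Variance term.} Conditionally on $\mathcal F_n$, $\hat q_W(x)$ is a two-sample U-statistic-like average over the independent blocks $(Y_i)_{i\in\cN_1(x,k)}$ and $(Y_j)_{j\in\cN_0(x,k)}$. These samples are independent but not identically distributed. Expanding $\mathrm{Var}(\hat q_W\mid\mathcal F_n)=k^{-4}\sum\cov\big(w(Y_i|Y_j),w(Y_{i'}|Y_{j'})\mid\mathcal F_n\big)$, the covariance vanishes unless $i=i'$ or $j=j'$. There are at most $2k^3$ such pairs of index pairs. With $|w|\le C$ this gives $\mathrm{Var}(V_n\mid\mathcal F_n)\le 2C^2/k\to0$. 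Conditional Chebyshev then gives $V_n\to0$ in probability, and combining with the bias term proves the theorem.

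\textbf{Main obstacle.} The delicate step is the bias step: making the within-arm nearest-neighbor convergence rigorous when arm membership is random. We handle it by conditioning on $(T_l)_l$. Under that conditioning, the $X_l$ with $T_l=t$ are i.i.d. from $P_X(\cdot\mid T=t)$. We then use positivity to transfer the support and small-ball mass of $P_X$ to each arm: $P(\NRM{X-x}\le r\mid T=t)\ge \eta\,P(\NRM{X-x}\le r)>0$. Finally, a binomial tail bound shows that at least $k$ points fall in the ball of radius $r$ once $n\,P(B(x,r))\gg k$.
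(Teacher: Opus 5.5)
Your proposal is correct and follows essentially the same route as the paper's proof: a bias/variance decomposition, convergence of the selected neighbors to $x$ plus continuity of $m$ at $(x,x)$ for the bias, and a Chebyshev bound in which only the at most $2k^3$ index quadruples sharing a treated or a control unit contribute, so the variance is $O(1/k)$. Conditioning on $\sigma\big((X_l,T_l)_{l\in[n]}\big)$ and using positivity to lower-bound each arm's small-ball mass are refinements rather than a different route, but they make rigorous two steps the paper treats loosely: the exact vanishing of covariances between disjoint pairs, which need not hold unconditionally because the data-dependent neighbor selection couples the covariates of different pairs, and nearest-neighbor convergence within a single treatment arm.
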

The proof is in \Cref{app:proof_knn}.
For $x\in\cX$ and $(i,j)\in\cN_1(x,k)\times \cN_0(x,k)$, $w(Y_i|Y_j)$ can be interpreted as an approximate sample of $\cP_{Y_i|X_i=x,T_i=1}\times\cP_{Y_j|X_j=x,T_j=0}$, which can be used to calculate $\hat q_W$ with \Cref{eq:gen_estimation}. 
We can also define $\hat q_L$ as $w_L(y|y')=w(y'|y)$ for the `anti-preference'. Like the `preference' function $q_W(x)$, $\hat q_L(x)\to q_L(x)$ with distributional K-NN: 
\begin{equation}\label{eq:q_L}
     q_L(x) \eqdef \esp{w(Y_i(0),Y_j(1))|X_i=X_j=x}\,.
\end{equation}
\subsection{Quantile Regression Approach}
\label{sec:quantile_regression}

Quantile regression methods \citep{Koenker1978} can be used to estimate both parametrically or non-parametrically the inverse CDF of $Y(1,x)$ and $Y(0,x)$. This allows us to draw samples from the estimated inverse CDF which can be used to estimate $\esp{w(Y_i(0),Y_j(1))|X_i=X_j=x}$.
We present in our experiments results with linear quantile regression \citep{seabold2010statsmodels, greene2003econometric} and random forest quantile regression \citep{meinshausen2006quantile}.
For any given quantile model we can use the general method described in \Cref{algo:sampling_based_estimation} (\Cref{app:distrib_algo}) to sample from the inverse CDF, giving us \textit{distributional} regression methods.
\Cref{algo:sampling_based_estimation} can also be used as a basis for multivariate outcomes if $Y$ is $d-$dimensional by using the Rosenblatt transform~\citep{rosenblatt1952remarks}.

\section{Preference-based Optimal Policies}
\label{sec:policy}

\subsection{Policy Value}
\label{sec:policy_value}

A policy is a function $\pi:\cX\to\set{0,1}$ that maps features to treatment assignment.
With real-valued outcomes, an optimal policy is a policy $\pi^\star$ that maximizes the policy value, $\esp{Y_i(\pi(X_i))}$, over all policies $\pi$ in a policy class $\Pi\subset\set{0,1}^\cX$.

\textit{How is a policy's preference to be measured?}
We define the \textbf{'Preference' Policy Value}, which quantifies how preferable the treatments assigned by the policy are to the alternative. For example a clinical trial may wish to estimate how preferable treating all units with treatment is to control. Let the \textbf{'Preference' Policy Value} write as, for any $\pi:\cX\to\set{0,1}$:
\begin{equation}\label{eq:value_policy}
    V(\pi)\eqdef \esp{\pi(X_i)q_W(X_i)+(1-\pi(X_i))q_L(X_i)}\,,
\end{equation}
where $q_W(x)=\CPTE(x)$ and $q_L(x)$ are as in \Cref{eq:q_L}.
The goal of policy learning is then to learn a policy $\pi$ that maximizes ``preference'' over a given policy class~$\Pi\subset\set{0,1}^\cX$ \citep{athey2021policy}:
\begin{equation}\label{eq:policy_learning}
    \pi^\star\in\argmax_{\pi\in\Pi} V(\pi)\,.
\end{equation} 
We next propose different strategies for estimating $\pi^\star$.

\subsection{Optimal Treatment Rule}
\label{sec:OTR}

An optimal treatment rule (OTR) is a policy that solves \Cref{eq:policy_learning}. For unconstrained policy learning (\textit{i.e.}, $\Pi=\set{0,1}^\cX$ is the set of all possible policies), $\pi^\star$ has a closed form expression:
\begin{equation}\label{eq:OTR}
    \forall x\in\cX\,,\quad \pi^\star(x)=\one_\set{\delta(x)>0}\,,
\end{equation}
where:
\begin{equation}\label{eq:delta_OTR}
    \forall x\in\cX\,,\quad \delta(x)\eqdef q_W(x)-q_L(x)\,.
\end{equation}
The optimal policy recommends a treatment if the `preference', $q_W(x)$ is higher than the `anti-preference', $q_L(x)$.
$\delta(x)$ can also be interpreted as a \textit{conditional net benefit} of the treatment, usually defined over the whole population for hierarchical outcomes \citep{Buyse2010}. 

\textbf{Direct Plug-In Estimation of the OTR.}
\label{sec:plug_in_OTR}
Our first policy learning approach in estimating the OTR $\pi^\star$ defined in \Cref{eq:OTR}, consists in directly estimating $\delta$ defined in \Cref{eq:delta_OTR}.
With $\hat q_W$ and $\hat q_L$ (non-parametric) estimators of $q_W,q_L$, let $\hat \delta(x)\eqdef \hat q_W(x)-\hat q_L(x)$.
The estimated OTR then writes as:
\begin{equation}\label{eq:OTR_plugin}
	\hat\pi_\mathrm{OTR}(x)\eqdef \one_\set{\hat\delta(x)>0}\,.
\end{equation}
Depending on the precision with which the nuisance parameters $q_W$ and $q_L$ are estimated, we obtain consistent estimators of $\pi^\star$.
In particular, if $\hat q_W,\hat q_L$ are built with $k-$NN, \Cref{thm:knn} guarantees that $\hat \pi$ is a consistent estimator of $\pi^\star$.

\subsection{Plug-in Policy Value Optimization}
\label{sec:plug_in_value}

Instead of directly plugging-in estimators of $\delta,q_W$ or $q_L$ in the closed-form expression of the OTR $\pi^\star$ defined in \Cref{eq:OTR}, one could consider a more restrictive policy class $\Pi$ to optimize over.
This would require estimating the value of a policy (defined in \Cref{eq:value_policy}). 
A plug-in approach to estimate the value of a policy $\pi$ would be to estimate nuisance parameters $\hat q_W,\hat q_L$, and to estimate $V(\pi)$ by $\hat V(\pi)$, defined as:
\begin{equation}\label{eq:estim_value_policy}
    \hat V(\pi)\eqdef \frac{1}{n}\sum_{i=1}^n \pi(X_i)\hat q_W(X_i) + (1\!-\!\pi(X_i))\hat q_L(X_i)\,.
\end{equation}
The plug-in optimization approach then considers the following approximate optimization problem:
\begin{equation}\label{eq:estim_optim_policy}
	\hat\pi_\mathrm{plug-in}\in\argmax_{\pi\in\Pi}\hat V(\pi)\,.
\end{equation}
The class $\Pi$ to consider can be as follows.\\
\textbf{Weighted classification.}
In our experiments, we parameterize a linear classifier as $f_\theta$ for $\theta\in\Theta$ and predict the sign of $\hat q_W(X_i)-\hat q_L(X_i)$ weighted by the magnitude of the difference, implicitly maximizing $\hat V(\pi)$ over linear classifiers.\\
\textbf{Policy trees.}
Policytree \citep{sverdrup2020policytree} can also be used, and directly implements \Cref{eq:estim_optim_policy}, for $\Pi$ the set of all binary decision trees, of a maximum depth $D$ (specified by the user). 


\section{The 1-Step Corrected Policy}
\label{sec:semi_param_efficient}
\label{sec:1-step_EIF}

Plug-in estimators for the nuisance functions in \Cref{eq:estim_value_policy} can cause large bias in $\hat V(\pi)$ if the models have large non-asymptotic biases (e.g. nearest-neighbors), if they overfit (e.g. random forests), or if they are misspecified (e.g. linear models).
Efficient Influence Functions (EIFs) can be used to correct for this plug-in bias \citep{hines2022demystifying} and to obtain estimators that satisfy, under mild regularity conditions, double robustness properties and semiparametric efficiency.
Here, efficiency is understood relative to a fully nonparametric statistical model, in the sense that EIF-based estimators achieve the lowest possible asymptotic variance among all regular, asymptotically linear estimators in this model.
The existence of an efficient influence function further requires that the preference-based policy value parameter be pathwise differentiable with respect to the observed data distribution. As in classical policy value estimation, this holds under standard smoothness conditions on the parameter mapping and regularity of the preference function $w$.
Positivity ensures that the resulting EIF is well-defined and square-integrable, and that EIF-based estimators are regular and attain the semiparametric efficiency bound in a fully nonparametric model. Therefore, we compute an EIF $\phi(\mathcal P)(o)$ for any distribution $\mathcal P$ and observation $o$ for preference policy learning.
Details on and intuition behind EIF correction are presented in \Cref{sec:EIF_correction}.

\begin{proposition}\label{prop:EIF}
	Let $\psi_\pi(\cP)=V(\pi)$ be our estimand. For $Y(t, x)$ as defined in \Cref{def:CATE}, let:
\begin{align*}
	&p_W(x,t,y) = t\esp{w(y|Y(0,x))}\\
    & + (1-t)\esp{w(Y(1,x)|y)}\,,
\end{align*}
and similarly $p_L(x,t,y)$ by replacing $w$ by $w_L(y|y')=w(y'|y)$.
Let $e(x)=\proba{T_i=1|X_i=x}$
and $q_W,q_L$ as previously defined.
Then the EIF of $\psi_\pi$ for a distribution $\cP$ and $o=(X,T,Y)$ is:
\begin{align*}
	&\Phi_\pi(\cP)(o)  = \\
    & -\psi_\pi(\cP) + \pi(X) q_W(X) + (1-\pi(X))q_L(X)\\
	& \quad + \frac{T}{ e(X)}\Big(\pi(X)(p_W(X,T,Y)-q_W(X))\\
    &\qquad + (1-\pi(X))(p_L(X,T,Y)-q_L(X))\Big)\\
	& \quad + \frac{1-T}{1- e(X)}\Big(\pi(X)(p_W(X,T,Y)-q_W(X)) \\
    &\qquad + (1-\pi(X))(p_L(X,T,Y)-q_L(X))\Big)\,.
\end{align*}
\end{proposition}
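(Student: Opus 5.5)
We compute the pathwise derivative of $\psi_\pi(\cP)=V(\pi)$ along regular parametric submodels $\{\cP_\varepsilon\}$ through $\cP$, and identify a mean-zero function $\Phi_\pi$ such that $\frac{d}{d\varepsilon}\psi_\pi(\cP_\varepsilon)|_{\varepsilon=0}=\esp{\Phi_\pi(\cP)(O)\,s(O)}$ for every score $s$. Since the model is fully nonparametric, the influence function is unique, so this $\Phi_\pi$ is the EIF.

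First, by \Cref{hyp:positivity,hyp:sutva,hyp:unconfoundedness}, write everything in observed-data terms. Let $P_t(\cdot\mid x)$ denote the law of $Y\mid X=x,T=t$. Then
\[
q_W(x)=\iint w(y|y')\,P_1(\dd y\mid x)\,P_0(\dd y'\mid x)\,,
\]
and $q_L$ is the same expression with $w$ replaced by $w_L$. Hence
\[
\psi_\pi(\cP)=\int \big[\pi(x)q_W(x)+(1-\pi(x))q_L(x)\big]\,P_X(\dd x)\,.
\]
Factor the density as $p(x)\,p(t\mid x)\,p(y\mid x,t)$. Any score then decomposes as $s(o)=s_X(x)+s_T(t\mid x)+s_Y(y\mid x,t)$, where $\esp{s_Y\mid X,T}=0$ and $\esp{s_T\mid X}=0$.

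Next, differentiate term by term.
\begin{itemize}
\item The $P_X$ factor contributes $\esp{(\pi q_W+(1-\pi)q_L)(X)\,s_X(X)}$. This matches the first line of $\Phi_\pi$; the $-\psi_\pi$ centering is free because $\esp{s_X}=0$.
\item $\psi_\pi$ does not depend on $p(t\mid x)$, so $s_T$ contributes nothing. This is consistent with the remaining terms of $\Phi_\pi$ having zero conditional mean given $(X,T)$.
\item The conditional laws enter $q_W(x)$ bilinearly, so the product rule applies. Perturbing $P_1(\cdot\mid x)$ gives
\[
\int\!\!\int w(y|y')\,s_Y(y\mid x,1)\,P_1(\dd y\mid x)\,P_0(\dd y'\mid x)=\esp{\,p_W(x,1,Y)\,s_Y(Y\mid x,1)\mid X=x,T=1}\,,
\]
since $p_W(x,1,y)=\esp{w(y|Y(0,x))}$.
\end{itemize}

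This conditional expectation must be rewritten as a full expectation against $s_Y$. Use the identity
\[
\esp{h(X,Y)\,s_Y\mid X,T=1}=\esp{\tfrac{T}{e(X)}\,h(X,Y)\,s_Y\mid X}\,.
\]
One may also subtract $q_W(X)=\esp{p_W(X,1,Y)\mid X,T=1}$ freely, since $\esp{s_Y\mid X,T}=0$. The control factor is handled the same way, using $p_W(x,0,y)=\esp{w(Y(1,x)|y)}$ and weight $(1-T)/(1-e(X))$. Since $p_W(X,T,Y)$ automatically selects the correct branch according to $T$, the two pieces combine into the displayed $T/e$ and $(1-T)/(1-e)$ terms. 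Multiplying by $\pi(X)$, repeating for $q_L$ and $p_L$ with weight $1-\pi(X)$, and collecting terms yields $\Phi_\pi$.

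Finally, check two properties.
\begin{itemize}
\item $\esp{\Phi_\pi}=0$: the bracketed corrections have zero conditional mean given $(X,T)$, because $\esp{p_W(X,t,Y)\mid X,T=t}=q_W(X)$.
\item $\Phi_\pi$ lies in the nonparametric tangent space, which holds trivially because the model is fully nonparametric.
\end{itemize}

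The main obstacle is the rigour of the product-rule step. We need to differentiate under the double integral, which requires $w$ bounded (as in all examples) or square-integrable, together with the usual $L^2$-differentiability of the submodel. Positivity guarantees that the weights $1/e$ and $1/(1-e)$ are bounded, so $\Phi_\pi\in L^2$.

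As a sanity check, we would also run the heuristic point-mass (Gateaux) computation with $\cP_\varepsilon=(1-\varepsilon)\cP+\varepsilon\delta_o$. It gives the same answer, with $\one\{T=t\}/\proba{T=t\mid X}$ arising from the derivative of the conditional law at $x$.
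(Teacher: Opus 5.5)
Your proposal is correct and arrives at the stated EIF, but by a different route from the paper.

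\textbf{The paper's route.} The paper uses the point-mass (Gateaux) recipe. It writes $\psi_W$ as $\int p(x)\frac{p(x,0,y)}{p(x,0)}\frac{p(x,1,y')}{p(x,1)}w(y'|y)\,\dd x\,\dd y\,\dd y'$. It then substitutes the contaminated law $\cP_h=(1-h)\cP+h\delta_o$ and reads off the coefficient of $h$. Differentiating the two conditional-density ratios is what produces the $T/e(X)$ and $(1-T)/(1-e(X))$ terms. This is done for the constant policies $\pi\equiv 1$ and $\pi\equiv 0$, and the results are combined pointwise with weights $\pi(X)$ and $1-\pi(X)$.

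\textbf{Your route.} You verify the defining property of the EIF directly. You differentiate along arbitrary regular submodels, split the score as $s_X+s_T+s_Y$, and use the bilinearity of $q_W,q_L$ in $(P_1,P_0)$. You then turn conditional expectations given $T=t$ into inverse-propensity-weighted unconditional ones.

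\textbf{Trade-offs.} The paper's computation is shorter and purely mechanical, but it is formal. A Dirac mass has no Lebesgue density, so the density manipulation is heuristic; the paper only says the expression ``naturally extends''. Moreover, a derivative along point-mass contaminations does not by itself establish pathwise differentiability. Your argument gives several things the paper's does not:
\begin{itemize}
\item It establishes pathwise differentiability and identifies the gradient in one step.
\item It needs no densities, so it covers indicator preference functions and the mixed binary/continuous hierarchical outcomes used later.
\item It shows transparently why the propensity score enters only as a weight.
\item It makes the regularity conditions explicit: bounded $w$, and positivity for $\Phi_\pi\in L^2$.
\end{itemize}

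\textbf{One step to state explicitly.} Centering by $q_W(X)$ and $q_L(X)$ is what makes the correction terms orthogonal to $s_X$ and $s_T$. That orthogonality follows from $\esp{p_W(X,t,Y)\mid X,T=t}=q_W(X)$, and likewise for $p_L$. So this identity is used in the identification step itself, not only in the final check that $\esp{\Phi_\pi}=0$.
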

The proof is given in \Cref{app:proof_EIF}.
Let $\hat e$ be an estimator of the propensity score, $\hat q_W,\hat q_L$  and $\hat p_W,\hat p_L$ estimators of the nuisance parameters $q_W,q_L,p_W,p_L$. 
The 1-step policy value estimator is the first-order correction of $\hat V(\pi)$ (\Cref{eq:value_policy}), and writes as $\Psi_\mathrm{1-step}^\pi \eqdef \Psi_\mathrm{plug-in}^\pi + \frac{1}{n}\sum_{i=1}^n \Phi_\pi(\hat \cP_n)(O_i)$, with $\Phi_\pi$ as in \Cref{prop:EIF} and with nuisance parameters replaced by their estimators $\hat e,\hat q_W,\hat q_L$  and $\hat p_W,\hat p_L$ (\Cref{eq:value_1_step_full} in \Cref{app:1-step}).
The 1-step optimal writes as:
\begin{equation}\label{eq:1-step_policy}
	\hat\pi_\mathrm{1-step}\in\argmax_{\pi\in\Pi} \set{\hat V_\mathrm{1-step}(\pi) \eqdef \Psi_\mathrm{1-step}^\pi}\,.
\end{equation}
This policy inherits the properties of the 1-Step correction for policy values and is expected to perform better than plug-in OTR (\Cref{sec:plug_in_OTR}) and plug-in value optimization policy (\Cref{sec:plug_in_value}).
In practice, the policy $\hat\pi_\mathrm{1-step}$ is computed as in \Cref{sec:policy_value}, using cross-fitted policy trees or weighted classification \citep{athey2021policy, chernozhukov2018double}. However, policy value $\hat V_\pi$ is replaced by its debiased value $\psi^\pi_\mathrm{1-step}$.
In our experiments, the 1-Step correction consistently improves policy performance when combined with a plug-in value optimization policy. 

\section{Experiments}
\label{sec:exps}
\FloatBarrier
\begin{figure*}[t]
  \centering
 \subcaptionbox{Synthetic experiment. Policy Learning with PNS policy value, heterogeneous RCT without correlation.\label{fig:synthetic_policy_learning_heterogeneous_RCT_uncorr}}%
    {
\includegraphics[width=\textwidth, trim=0 14cm 3.3cm 8.7cm, clip]{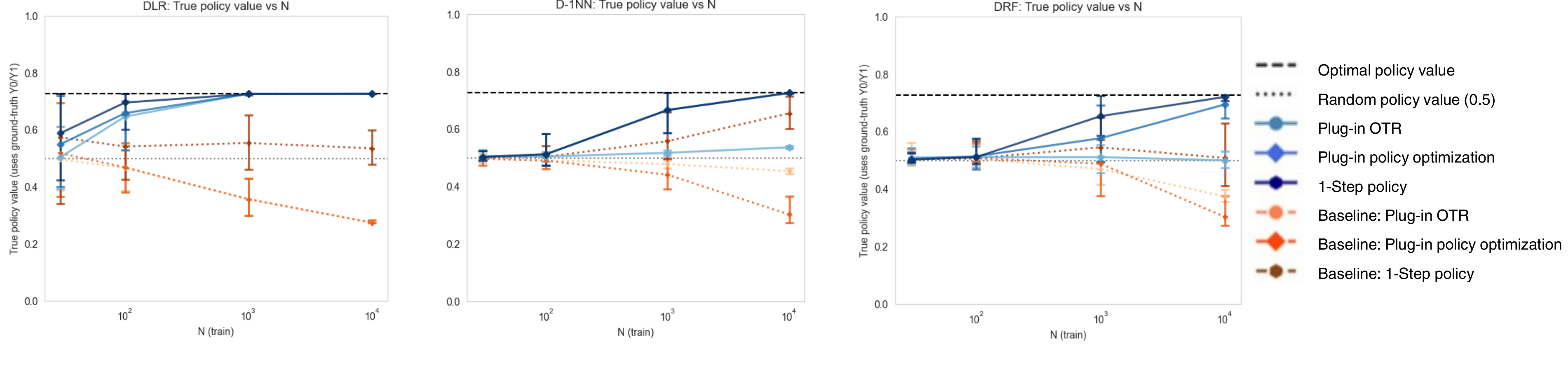}
}

\subcaptionbox{Semi-synthetic experiment. Policy learning with hierarchical outcomes, no potential outcomes correlation.\label{fig:semi_synth_policy_values_uncorr}}%
    {
\includegraphics[width=\textwidth, trim=0 16cm 3.3cm 8.7cm, clip]{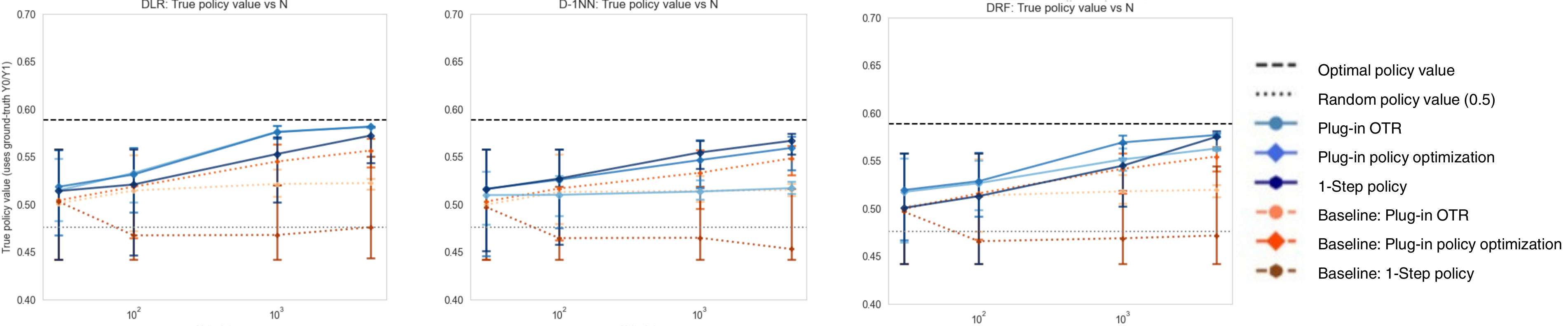}
}

  \caption{Policy learning results on synthetic and semi-synthetic experiments with no outcome correlations}
  \label{fig:policy_learning_main}
\end{figure*}

We conduct synthetic and semi-synthetic experiments to evaluate the performance of preference-based policy learning under our proposed approach, which estimates CPTE-related policy values. Our results demonstrate that the method consistently learns near-optimal policies even when key assumptions do not hold. We further benchmark its performance against both non-preference-based methods and policies with oracle access, highlighting the robustness and competitiveness of our approach.

\subsection{Synthetic experiments}

We consider a synthetic experimental setup with real-valued outcomes and $w(y|y')=\one_\set{y>y'}$, corresponding to the PNS application.
The data-generating process is detailed in \Cref{app:synth_exp_data_generating_process} and satisfies \Cref{hyp:positivity,hyp:sutva,hyp:unconfoundedness}.
Our synthetic experimental setup explores the benefits of preference-based policy learning with CPTE, contrasted with classical CATE policies. Outcomes are generated so that each unit's PNS prefers one treatment, while the CATE indicates the other one. This can occur when one of the potential outcomes distributions has a heavy tail distribution.
Figures~\ref{fig:large_div} and~\ref{fig:small_div} (\Cref{app:exps}) illustrate two examples of such potential outcome distributions. 
Our data-generating process compares the following properties and their impact on preference-based policy learning:
\textit{(i)} RCT vs. observational study;
\textit{(ii)} Homogeneous vs. heterogeneous treatment effect (\Cref{def:effect_modif});
\textit{(iii)} Correlation between counterfactual outcomes (negative, non-correlated, or positive correlation), violating the identifiability conditions of the Conditional ITE in \Cref{lem:treatment_effect_modifiers}.

We examine policy learning and evaluation for different combinations of the data generating parameters. We show results of the three different methods introduced in this paper to sample from the conditional outcome distribution $P(Y|X,T)$ and estimate the CPTE: distributional $k-$NN with varying values of $k$ as in \Cref{sec:matching} (with $k=1$ in \Cref{fig:policy_learning_main}), distributional random forests (DRF) and distributional linear regression (DLR) (\Cref{sec:quantile_regression}).
We compare our CPTE policies to CATE policies learned using a T-learner framework and corresponding base-learners: standard $k-$NN, ridge regression and regression forests \citep{kunzel2019metalearners}, which all estimate $\esp{Y|X,T}$ instead of sampling from $P(Y|X,T)$ like the distributional methods. 
These are denoted as \textit{baselines} in our figures.
The policy is then learned using the following approaches: plug-in OTR (\Cref{sec:plug_in_OTR}); plug-in policy value optimization (\Cref{sec:plug_in_value}); and policy value optimization with the EIF-corrected value (1-Step corrected policy, \Cref{sec:semi_param_efficient}).
Policies are trained 50 times, and evaluated on a held-out set with oracle knowledge.

In \Cref{fig:synthetic_policy_learning_heterogeneous_RCT_uncorr}, we present the heterogeneous RCT scenario with uncorrelated potential outcomes. The figure shows the preference policy value with 95\% bootstrap confidence intervals. The EIF correction consistently improves policy learning. Because the data-generating process is linear, DLR performs best, while DRF and KNN converge more slowly.
Results for policy value estimation are provided in \cref{fig:policy_eval_synth}, \Cref{app:synth_detailed_results}.
Plug-in and EIF policy evaluation converge to the true policy value, while the baseline CATE-based policies diverge to 0 or 1.
Policy learning results for other combinations of the data generating process (RCT/Observational, homogeneous/heterogeneous, correlated potential outcomes) are deferred to \Cref{fig:synthetic_policy_learning_grid}, \Cref{app:synth_detailed_results}.
Importantly, while correlated potential outcomes significantly bias policy value estimation, the learned policy itself still obtains good oracle performances. Even when the identifiability assumptions of \Cref{lem:treatment_effect_modifiers} do not hold, our methodology still learns good preference-based policies.

\subsection{Semi-synthetic experiments}

We conduct a semi-synthetic experiment inspired by the Tennessee STAR study \citep{star1990state}, a landmark randomized controlled trial on the effects of class size on student achievement. The study followed over 11,000 students from kindergarten through third grade across 79 schools in Tennessee, randomly assigning them to small, regular-sized or regular classes with a teacher’s aide. Student outcomes were measured in core subjects, including math and reading, along with detailed records of participation and attrition. In our setup, we treat math scores as a secondary continuous outcome and construct a binary primary outcome from student attrition, using the same preference function as in \Cref{eq:win_lexico} and \citep{Pocock2011winratio}. 
We collapse regular classrooms with and without aides, yielding 1,762 small-class students and 4,109 regular-class students after filtering.
We generate synthetic outcomes for this real data by training an MLP and using MC-dropout \citep{gal2015dropout}, approximating conditional distributions. These then serve as ground-truth for optimal policy and value. 
The full data generating process is detailed in \Cref{app:semi_synth_data}.

We compare the same estimation and policy learning methods as in the synthetic experiments.
The semi-synthetic setting allows us to control correlations between potential outcomes, while still being constrained by real-world covariates and approximating observed outcome distributions. 
We provide experiments on non-correlated potential outcomes, negative correlations and positive correlations, using the Iman-Conover method \citep{iman1982distribution}.
Results on the semi-synthetic STAR data show that `preference'-based policies outperform mean-based ones, as they balance hierarchical outcomes, prioritizing attrition while leveraging math scores. Unlike our synthetic experiments, the preference functions are close to the decision boundary, making policy learning more difficult. This is highlighted by
\Cref{fig:semi_synth_policy_values_uncorr}. DLR is misspecified, EIF corrects DRF and KNN for larger samples. Baseline methods perform poorly.
Similarly to the synthetic experiments, adding strong correlations between potential outcomes does not impact policy learning performance. \Cref{fig:semi_synthetic_policy_learning_grid,fig:semi_synth_policy_agreements_hierarchical} (\Cref{app:semi_synthetic}) respectively summarizes all semi-synthetic policy learning results and shows policy agreement results relative to the hierarchical oracle policy, and oracle policies that solely optimize the primary or secondary outcome.

  


  


\section{Conclusion}

This paper provides a general methodology for policy learning and for quantifying conditional treatment effects with complex outcomes that are best assessed via comparison.
We define
\textit{(a)} an unidentifiable conditional treatment effect (\Cref{eq:unidentifiable_CPTE});
\textit{(b)} an identifiable one (\Cref{def:CATE});
\textit{(c)} and offer estimators for the identifiable quantity (\Cref{sec:CPTE_estimation}).
Our experiments show that our estimators and policy learning methods efficiently learn preference-based policies, linking \textit{(b)} and \textit{(c)}. They also showcase when preference-based policies may be preferable to CATE-based policies. Lastly, we explore the effects of violating the identifiability assumptions connecting \textit{b} to \textit{a} by adding correlation to the potential outcomes in both the synthetic (\cref{app:synth_correlation}) and semi-synthetic (\cref{app:semi_synth_correlation}) experiments. We find policy evaluation to be sensitive to this violation, while policy learning was negligibly impacted.\\
Our work paves the way to several orthogonal research questions for future works, including performing sensitivity analyses to assess when the conditional measure defined in \Cref{def:CATE} is close to the non-identifiable one, and leveraging our framework with unknown preference functions $w$ that would need to be learned while optimizing policies.

\subsubsection*{Acknowledgments}
We thank the anonymous reviewers for their helpful feedback which helped improve the quality of our paper.
M. Even acknowledges funding from \href{https://www.theremia.health/home}{Theremia}. D.Parnas and U.Shalit were supported by ISF grant 2456/23.


\bibliographystyle{plainnat}

\clearpage

\clearpage
\appendix
\thispagestyle{empty}

\onecolumn

\section{Further specific details}

\subsection{Illustrative Adversarial Examples}
\label{app:examples}

\textbf{Example 1: observed and structural treatment effect modifiers.}
Let us place ourselves in a simple setting where we assume that features $X_i\in\cX$ \textit{do not} include treatment effect modifiers, denoted as $O_i\in\cO$.
	Assume that $\cY\subset\R$, $w(y|y')=\one_\set{y>y'}$, the set of treatment effect modifiers $\cO$ is of the form $\cO=\cO_1\cup\cO_2$ with $\proba{O_i\in\cO_1}=1-\alpha$, $\proba{O_i\in\cO_2}=\alpha$ and $O_i$ is independent of $X_i$, such that:
	\begin{align*}
		&Y_i(1)=y_1>y_0=Y_i(0)|O_i\in\cO_1\,,\\
        &Y_i(1)=y'_1<y'_0=Y_i(0)|O_i\in\cO_2\,,
	\end{align*}
	almost surely.
	We then have, if $y_0'>y_1>y_0>y_1'$:
	\begin{align}
		&\esp{w(Y(1,X_i)|Y_i(0))|X_i}=(1-\alpha)^2\,,\label{eq:none}\\
		&\esp{w(Y_i(1)|Y_i(0))|X_i}=1-\alpha\,,\label{eq:structural}\\
		&\esp{w(Y(1,X_i)|Y_i(0))|X_i,O_i}=\one_\set{O_i\in\cO_1}\label{eq:everything}\,.
	\end{align}
\Cref{eq:none} is the CPTE when we observe only $X_i$, which is non-informative. If we observe also the informative variable $O_i$, in \Cref{eq:everything}, heterogeneity is fully captured with the treatment effect modifier $O_i$.
However, \Cref{eq:structural} can also be written as the average of \Cref{eq:everything} conditionally on $X_i$: but we don't obtain the same value as in \Cref{eq:none}. This is due to the fact that $O_i$ is also a \textit{structural} treatment effect modifier.

\textbf{Example 2: linear CATE \textit{vs} CPTE.}
We now present an example showing different treatment recommendations between a classical linear CATE and our preference-based conditional effects. The example illustrates how CPTE leverages more than first-order moment information, and follows a majority-vote approach for treatment recommendations.
Let $\cY=\R$ and
\begin{align*}
    &Y_i(1) \sim \frac{\delta_{0}+\delta_{0.1} +\delta_{1.1}}{3}|X_i=x\,,\\
    &Y_i(0) \sim \frac{\delta_{-0.1}+\delta_{1}}{2}|X_i=x\,.
\end{align*}
In this case, $\CPTE(x)= 0.66>0.5$, while $\esp{Y_i(1)-Y_i(0)|x}=0.4-0.45=-0.05<0$.
Treatment recommendations differ depending on the policy maker's objective - whether they consider the magnitude of the effects (risk difference) or the most likely treatment to benefit unit $X_i$.
This archetype is the basis of our synthetic experiments.

\subsection{Translation to Structural Causal Model (SCM) terminology.}
\label{app:SCM}
Here we reformulate several of our basic definitions and theorem's within Pearl’s Structural Causal Model (SCM) framework. 
Let $(\mathcal{G}, U, f)$ be an SCM, where $U$ denotes exogenous variables, 
$V = (T, Y, X)$ endogenous variables, and
\[
Y = f_Y(T, X, U_Y).
\]
Counterfactual outcomes are defined by interventions as
\[
Y(t) := Y \mid \mathrm{do}(T=t).
\]

\paragraph{Preference-based individual causal effect.}
For a unit characterized by exogenous variables $U$, the individual-level
preference-based causal effect is
\[
\Delta(U_i) := w\big(f_Y(1, X_i, U_i),\, f_Y(0, X, U_i)\big)\,.
\]
Note that we have indeed $\Delta_i=\Delta(U_i)$.
The conditional ITE is then equal to:
\begin{equation*}
    \esp{w\big(f_Y(1, X_i, U_i),\, f_Y(0, X, U_i)\big)}\,.
\end{equation*}

\paragraph{SCM interpretation of CPTE - Definition \ref{def:CATE}.}
For a given covariate value $X=x$, the Conditional Preference Treatment Effect
$\mathrm{CPTE}(x)$
corresponds to comparing outcomes generated under independent draws of the
exogenous variables:
\[
U \sim P(U \mid X=x), 
\qquad
U' \sim P(U \mid X=x),
\qquad
U \perp U',
\]
and $Y(1,x)$, $Y(0,x)$ defined in \Cref{def:CATE} can also be defined as:
\[
Y(1,x) = f_Y(1, x, U_Y),
\qquad
Y(0,x) = f_Y(0, x, U'_Y),
\]
yielding the same CPTE definition:
\[
\mathrm{CPTE}(x)
=
\mathbb{E}\!\left[
w\!\big(Y(1,x), Y_i(0)\big)
\right] = 
\mathbb{E}\!\left[
w\!\big(Y_i(1), Y(0,x)\big)
\right]
\]

\paragraph{Structural treatment effect modifiers (SCM) - Definition \ref{def:structural_treatment_effect_modifiers}.}
A set of variables $Z_I \subseteq U$ is a set of \emph{structural treatment
effect modifiers} if for all $i,j\in[n]$ and all $x\in\R^\cI$:
\begin{equation*}
    \esp{w(Y_i(1),Y_j(0))\,|\, Z_i,Z_j,(Z_i^\cI,Z_j^\cI)=(x,x)}=g(x)\,,
\end{equation*}
where $z^\cI=(z_k)_{k\in\cI}$ for $z\in\R^p$.

\textbf{Conditions for CPTE equivalence to individual-level estimation - Theorem \ref{lem:treatment_effect_modifiers}.}\\
Assume that structural treatment effect modifiers in the sense of \Cref{def:structural_treatment_effect_modifiers} are included in the set of features $X_i$. Then, $\CPTE(x)=\esp{w(Y(1, U_i)|Y(0, U_i))|X_i=x}$ for all $x\in\cX$.

\subsection{Correcting the plug-in bias with efficient influence functions}
\label{sec:EIF_correction}

Plug-in estimators for the nuisance functions in \Cref{eq:estim_value_policy} can cause large bias to $\hat V(\pi)$ if the models have large non-asymptotic biases (e.g. nearest-neighbors), if they overfit (e.g. random forests) or if they are misspecified (e.g. linear models).
Efficient Influence Functions (EIFs) can be used to correct for this plug-in bias \citep{hines2022demystifying} and to obtain efficient estimators that satisfy, under mild assumptions, double robustness properties, semi-parametric efficiency optimality and minimal asymptotic variance among certain classes of estimators.

Let $\Psi(\cP)$ be an estimand we wish to estimate, that is the function of an observed distribution $\cP$.
In our case, $\cP$ is the distribution of our $n$ observations $O_i=(X_i,T_i,Y_i)$, and the estimand is $\Psi_\pi(\cP)= V(\pi)$, so that:
\begin{equation*}
    \Psi_\pi(\cP)=\esp{\pi(X_i) q_W(X_i)+ (1-\pi(X_i)q_L(X_i))}\,.
\end{equation*} 
As we saw before, in order to estimate $\Psi_\pi(\cP)$, we first need (non-parametric) nuisance estimates (of $q_W$ and $q_L$) learned on the data.
However, small perturbations and errors in these nuisance estimations can lead to biases in our estimate of $\Psi_\pi$ if we simply plug-in the nuisance estimators like in \Cref{eq:value_policy}. This is the \textit{plug-in policy estimator} of $\Psi_\pi$, denoted as $\Psi^\pi_\mathrm{plug-in}$.

\textbf{Efficient Influence Function (EIF).}
The Efficient Influence Function $\phi_\pi(\cP)$ of $\Psi_\pi$ at $\cD$ for any observation $o$ can be written:
\begin{equation*}
	\phi(\cP)(o) \eqdef \lim_{t\to 0} \frac{\dd \psi(\cP_t)}{\dd t}\,,\quad \text{where}\quad \cP_t = (1-t)\cP + t \delta_o\,,
\end{equation*}
where $\delta_o$ is a Dirac at $o$.
It should be understood as a ‘gradient' of $\Psi_\pi$, computed at $\cP$ and evaluated on observation $o$.
\citet{hines2022demystifying} provides methodologies to compute these EIFs with examples on classical estimands such as the ATE, ATT or RR.
The 1-step estimator corrects first-order biases that come from wrongly estimating the nuisance parameters of the distribution, where $\Psi^\pi_\mathrm{plug-in}= \hat V(\pi)$ (\Cref{eq:estim_value_policy}) and $\Phi^\pi$ is its EIF:
\begin{equation}\label{eq:value_1_step}
	\Psi_\mathrm{1-step}^\pi \eqdef \Psi_\mathrm{plug-in}^\pi + \frac{1}{n}\sum_{i=1}^n \Phi^\pi(\hat \cP_n)(O_i)\,.
\end{equation}

\subsection{Distributional regression algorithm}
\label{app:distrib_algo}
The meta-algorithm \Cref{algo:sampling_based_estimation} outputs estimators $\hat q_W,\hat q_L:\cX\mapsto[0,1]$ of $q_W(x)=\esp{w(Y_i(1),Y(0, X_i))|X_i=x}=\CPTE(x)$ and $q_L(x)=\esp{w(Y_i(0),Y(1,X_i))|X_i=x}$ (the CPTE with reversed treatments) from a given quantile models $M^{(t)}_q(x)$ and an interpolation rule.
First, \Cref{algo:sampling_based_estimation} discretizes $[0,1]$ into $\cQ$, and computes values $Y_q(t,x)$ for counterfactuals corresponding to these quantiles. This step estimates the inverse CDF.
Then $S$ other quantiles are sampled, and values corresponding to these quantiles are computed using the interpolation rule, sampling from the inverse CDF. \texttt{Interp} is any function that takes as inputs a set $(q,f(q))_{q\in\cQ}$ and a value $x\in[0,1]$ where $\cQ$ is a discretization of $[0,1]$, $f:[0,1]\to\cY$ is a model evaluated only on discretized values. \texttt{Interp} then outputs a value for $f(x)$.
The obtained values are then averaged in a final step.

\begin{algorithm}[H]
\caption{Sampling-based Estimation of $q_W, q_L$ via Quantile Models \label{algo:sampling_based_estimation}
}
\begin{algorithmic}[1]
\Require Quantile models $M_q^{(1)}(x)$ and $M_q^{(0)}(x)$; grid size $|\mathcal{Q}|$; number of samples $S$; interpolation rule \texttt{Interp}.
\Ensure Estimator $q_W, q_L$.

\State Sample a grid $\mathcal{Q}=\{q_1,\dots,q_{|\mathcal{Q}|}\}$ with $q_k \sim \mathrm{Unif}(0,1)$.
\State For each $q\in\mathcal{Q}$, compute
  \[
    Y_q(1,x) \gets M_q^{(1)}(x), \qquad
    Y_q(0,x) \gets M_q^{(0)}(x).
  \]
\State Draw $S$ independent quantiles $u_s^{(1)},u_s^{(0)} \sim \mathrm{Unif}(0,1)$ for $s=1,\dots,S$.
\State For each $s$, interpolate:
  \[
    Y_s(1,x) \gets \texttt{Interp}\!\big(\{(q, Y_q(1,x))\}_{q\in\mathcal{Q}},\, u_s^{(1)}\big),
  \]
  \[
    Y_s(0,x) \gets \texttt{Interp}\!\big(\{(q, Y_q(0,x))\}_{q\in\mathcal{Q}},\, u_s^{(0)}\big).
  \]
\State Estimate
  \[
    \widehat{q}_W(x) \gets \frac{1}{S}\sum_{s=1}^S
      w\!\big(Y_s(1,x),\,Y_s(0,x)\big).
  \]
  \[
    \widehat{q}_L(x) \gets \frac{1}{S}\sum_{s=1}^S
      w\!\big(Y_s(0,x),\,Y_s(1,x)\big).
  \]

\end{algorithmic}
\end{algorithm}

\subsection{The 1-Step policy value estimator}
\label{app:1-step}
The 1-step policy value estimator, defined in \Cref{eq:value_1_step}, writes as:
\begin{equation}\label{eq:value_1_step_full}
    \begin{aligned}
	\Psi_\mathrm{1-step}^\pi & = \frac{1}{n}\sum_{i=1}^n \Big[ \pi(X_i)\hat q_W(X_i)  + (1-\pi(X_i))\hat q_L(X_i)\\
	& \quad + \frac{T_i}{e(X_i)}\Big(\pi(X_i)(\hat p_W(X_i,T_i,Y_i)-\hat q_W(X_i))\\
    &\quad+ (1-\pi(X_i))(\hat p_L(X_i,T_i,Y_i)-\hat q_L(X_i))\Big)\\
	& \quad + \frac{1-T_i}{1- e(X_i)}\Big(\pi(X_i)(\hat p_W(X_i,T_i,Y_i)-\hat q_W(X_i)) \\
    &\quad+ (1-\pi(X_i))(\hat p_L(X_i,T_i,Y_i)-\hat q_L(X_i))\Big) \Big]\,.
\end{aligned}
\end{equation}

\section{Miscellaneous proofs}

\subsection{Proof of \Cref{lem:unidentifiable}}
\label{app:proof_non_ident}

\begin{proof}
    Assume that $\cY=\set{0,1}$, such that $Y_i(1),Y_i(0)$ are Bernoulli variables of mean $0.5$.
    We take $Y_i(1),Y_i(0)$ independent of $X_i$.
    Coupling $(Y_i(1),Y_i(0))|X_i=x$ as $Y_i(1)=Y_i(0)$ gives $\proba{Y_i(1)>Y_i(0)}=0$, while taking independent potential outcomes leads to $\proba{Y_i(1)>Y_i(0)}=\proba{Y_i(1)=1,Y_i(0)=0}=\frac{1}{4}$.
	Since the distribution of the observations $(X_i,T_i,Y_i)$ does not change by taking either coupling but the value of $\esp{w(Y_i(1)|Y_i(0))}$ changes, we have non-identifiability: $\esp{w(Y_i(1)|Y_i(0))}$ cannot be expressed as a function of the law of the observations.
\end{proof}

\subsection{Proof of \Cref{lem:treatment_effect_modifiers}}

\begin{proof}
    If $X_i=Z_i^\cI$ with $\cI$ the set of structural treatment effect modifiers or if $Z_i\subset X_i$, there exists $g$ such that $\esp{w(Y_i(1)|Y_j(0))|Z_i,Z_j,(X_i,X_j)=x}=g(x)$.
    Then,
    \begin{align*}
        \esp{w(Y_i(1)|Y_j(0)|X_i=X_j=x} &= \esp{\esp{w(Y_i(1)|Y_j(0)|Z_i,Z_j,(Z_i^\cI,Z_j^\cI)=(x,x)}|X_i=X_j=x}\\
        &= \esp{g(x)|X_i=X_j=x}\\
        &= g(x)\,.
    \end{align*}
    Now, taking $i=j$,
    \begin{align*}
        \esp{w(Y_i(1)|Y_i(0)|X_i=x} &= \esp{\esp{w(Y_i(1)|Y_i(0)|Z_i,Z_i^\cI=x}|X_i=x}\\
        &= \esp{g(x)|X_i=x}\\
        &= g(x)\,,
    \end{align*}
    concluding the proof.
    \end{proof}

\section{Proof of \Cref{thm:knn}}
\label{app:proof_knn}

Let $\mathbf{X_t}=\set{X_i, T_i=t}$ for arms $t=0,1$ and $\mathbf{X}=\mathbf{X}_0\cup\mathbf{X}_1$.
For any $x\in \cX$, let $Y(1,x)$ and $ Y(0,x)$ be independent copies of $Y_i(1)|X_i=x$ and $Y_i(0)|X_i=x$ respectively, which are also independent of each other. $Y(1,x)$ and $Y(0,x)$ are independent.\\
Let the $k-$nearest neighborhood of arm $t\in\set{0,1}$ of some point $x\in\cX$ be:
\begin{equation*}
    \cN_t(x,k) = \{a\in \mathbf{X_t} \ s.t. \ |\{b\in \mathbf{X_t}: ||b-x||<||a-z||\}| < k \}
\end{equation*}
A generic neighborhood, independent of treatment assignment, is designated $\cN(x,k)$.
First,
\begin{align*}
    \esp{\hat q(x)} & = \esp{w(Y_i|Y_j)|(i,j)\in\cN_1(x,k)\times \cN_0(x,k)}\\
    & = \esp{w(Y_i(1)|Y_j(0))|(i,j)\in\cN_1(x,k)\times \cN_0(x,k)}\quad\text{(\Cref{hyp:sutva,hyp:unconfoundedness})}\,.
\end{align*}

\textbf{Step 1: $\esp{\hat q(x)}\to q(x)$.}
We have:
\begin{align*}
\esp{\hat q(x)}-q(x) &= \esp{w(Y_i(1)|Y_j(0))|(i,j)\in\cN_1(x,k)\times \cN_0(x,k)} - \esp{w(Y(1,x), Y(0,x))} \\
    & = \esp{w(Y_i(1)|Y_j(0))-w(Y(1,x), Y(0,x))|(i,j)\in\cN_1(x,k)\times \cN_0(x,k)} \,.
\end{align*}
We then introduce the following classical lemma, which we prove for completeness.

\begin{lemma}\label{lem:neighbors_convergence_in_probability}
Let $x\in \cX$ and any $\hat{x} \in \cN(x,k)$. We have:
\begin{equation*}\hat{x} \overset{p}{\to} x\end{equation*}
when $k=o(n/log(n))$.
\end{lemma}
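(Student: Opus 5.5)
The plan is to reduce the claim to a statement about the $k$-th nearest neighbor distance. Since every $\hat x\in\cN(x,k)$ satisfies $\NRM{\hat x-x}\le R_k(x)$, where $R_k(x)$ is the distance from $x$ to its $k$-th nearest sample point, it suffices to show $R_k(x)\overset{p}{\to}0$. Concretely, fix $\eps>0$. We will show that $\proba{R_k(x)>\eps}\to 0$. We work with the full sample, and separately with each arm $\mathbf{X}_t$. By \Cref{hyp:positivity}, each arm receives each sample point with probability at least $\eta$ conditionally on $X_i$. So an arm behaves like an i.i.d.\ sample whose covariate law has density, relative to the law $\mu$ of $X_i$, bounded below by $\eta$, and whose size is at least $\eta n/2$ with overwhelming probability.

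The key quantity is $p_\eps\eqdef\mu(B(x,\eps))$. We take $x$ in the support of $\mu$, which is the only case in which the statement is meaningful, so $p_\eps>0$. Let $N_\eps=\sum_{i}\one_\set{X_i\in B(x,\eps),\,T_i=t}$. This is a binomial variable with parameters $n$ and $p'_\eps\ge \eta p_\eps>0$. The event $\{R_k(x)>\eps\}$ equals $\{N_\eps<k\}$. Since $k=o(n/\log n)$, and in particular $k\le n\eta p_\eps/2$ for $n$ large, a multiplicative Chernoff bound gives
\begin{equation*}
\proba{N_\eps<k}\le \proba{N_\eps<\tfrac12 n p'_\eps}\le \exp\!\left(-\tfrac{n p'_\eps}{8}\right)\longrightarrow 0\,.
\end{equation*}
It follows that $\sup_{\hat x\in\cN_t(x,k)}\NRM{\hat x-x}\le R_k(x)\overset{p}{\to}0$, and the lemma follows.

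The main obstacle is not the concentration step, which is routine. It is making precise what is needed of $x$ and of the dependence on $k$. First, $x$ must lie in the support of $\mu$; otherwise the neighbors converge to the nearest support point, not to $x$. Second, the $\log n$ factor is only needed if one wants a union bound over all $k$ neighbors with $\eps$ shrinking, or uniformity over $x$. For fixed $x$ and $\eps$, $k=o(n)$ already suffices. I would therefore state the support condition explicitly and note that $k\log(n)/n\to0$ is stronger than needed. I would keep the $\log n$ only to obtain a rate, namely choosing $\eps_n$ with $p_{\eps_n}\gtrsim k\log n/n$, so that the Chernoff bound becomes summable. That rate is convenient for combining with continuity in Step 1 of the proof of \Cref{thm:knn}.
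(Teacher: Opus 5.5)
Your argument is correct. It shares the paper's skeleton: both proofs reduce $\{\NRM{\hat x - x}>\eps\}$ to the event that fewer than $k$ sample points fall in $B(x,\eps)$, which is a binomial lower-tail event.

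The difference is in how that tail is bounded. The paper bounds $\sum_{i<k}\binom{n}{i}p^i(1-p)^{n-i}$ crudely by $k n^k q^n$, with $q=\max(p,1-p)$, using $\binom{n}{i}\le n^k$ and $p^i(1-p)^{n-i}\le q^n$. This is fully elementary, but the factor $n^k$ is exactly what forces $k\log n/n\to 0$. Your multiplicative Chernoff bound $\exp(-np'_\eps/8)$ shows that $k=o(n)$ suffices for fixed $x$ and $\eps$. So the $\log n$ in the hypothesis comes from the paper's bounding technique, not from any need for union bounds.

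You also make explicit something the paper leaves implicit. The lemma is stated for a generic neighborhood over the pooled sample, whereas Theorem~\ref{thm:knn} uses the arm-specific neighborhoods $\cN_t(x,k)$. You count $\one_\set{X_i\in B(x,\eps),\,T_i=t}$ over the full i.i.d.\ sample and lower-bound its success probability by $\eta p_\eps$ via positivity. This transfers the argument to each arm without conditioning on the random arm sizes.

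Both proofs need $x$ in the support of the covariate law, which is the paper's case $p=0$. Your version also makes the paper's separate treatment of the degenerate case $p=1$ unnecessary.
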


\begin{proof}
We need to show:
\begin{equation*}
\forall \epsilon>0\,,\quad   \lim_{n\to\infty} \mathop{P(||x-\hat{x}||>\epsilon})\,.
\end{equation*}
Let $\mathop{B}(x, \epsilon)$ be a sphere of $\epsilon$ radius around x and let $p = \P (\hat{x}\in \mathop{B}(x,\epsilon))$ be the probability of $\hat{x}$ being in the $\epsilon$-sphere.
We denote $Z = |\{z \in \mathbf{X}| z \in \mathop{B}(x, \epsilon)\}|$, the number of datum that fall within the $\epsilon$-sphere. 
We note that $Z \sim Bin(N, p)$, leading to:
\begin{equation*}
    \lim_{n\to\infty} \mathop{P(||x-\hat{x}||>\epsilon}) \leq \lim_{n\to\infty} \P (Z<k) = \lim_{n\to\infty} \sum_{i=0}^{k-1} \binom{n}{i}p^{i}(1-p)^{n-i}
\end{equation*}
If $p \in (0,1)$,
\begin{equation*}
    \lim_{n\to\infty} \sum_{i=0}^{k-1} \binom{n}{i}p^{i}(1-p)^{n-i} \leq \lim_{n\to\infty} k n^k q^n
\end{equation*}
where $q=\max(p,1-p)<1$ and where we bounded $\binom{n}{i}$ by $n^i\leq n^k$.
Thus, since $k n^k q^n=\exp(\log(k) + l\log(n) + n\log(q))$ and $\log(q)<0$, as long as $\frac{k\log(n)}{n}\to 0$:
\begin{equation*}
\lim_{n\to\infty} \mathop{P(||x-\hat{x}||>\epsilon})
\,,
\end{equation*}
hence the result.
Then, if $p\notin(0,1)$, either $p=0$ or $p=1$.
$p=0$ corresponds to $x\notin\mathrm{Supp(X_i)}$ which we do not consider (since $q$ will be evaluated on sampled datapoints) while if $p=1$ it means that $X_i$ is a deterministic random variable equal to $x$, meaning that the lemma is trivial.
\end{proof}

Since we assume that $w: (x, x') \rightarrow q(x, x')$ is continuous almost everywhere, we conclude using the continuous mapping theorem \citep{van2000asymptotic}.
Thus, $\esp{\hat q(x)}\to q(x)$.
To have consistency, note that
\begin{equation*}
    \P \left(|\hat{q}(x) - q(x)|>\epsilon\right) = \P \left(|\hat{q}(x) - \mathbb{E}[\hat{q}(x)] + \mathbb{E}[\hat{q}(x)] - q(x)|>\epsilon\right)\,.
\end{equation*}
Using an union bound,
\begin{equation*}
    \P \left(|\hat{q}(x) - q(x)|>\epsilon\right)\leq \P \left(|\hat{q}(x) - \mathbb{E}[\hat{q}(x)]| > \epsilon/2\right) + \P \left(|\mathbb{E}[\hat{q}(x)] - q(x)|>\epsilon/2\right)\,.
\end{equation*}
We thus need to show that:
\begin{equation*}
    \forall \eps>0\,,\quad \P \left(|\hat{q}(x) - \mathbb{E}[\hat{q}(x)]| > \epsilon\right)\to0\,.
\end{equation*}

\textbf{Step 2: $|\hat{q}(x) - \mathbb{E}[\hat{q}(x)]|\to 0$ in probability.} We have:
\begin{align*}
    \P \left(|\hat{q}(x) - \mathbb{E}[\hat{q}(x)]| > \epsilon\right) &= \proba{\left|\frac{1}{k^2} \sum_{(i,j)\in \cN_1(x,k) \times \cN_0(x,k)} w(Y^i, Y^j) - \mathbb{E}[w(Y^i, Y^j)]\right| > \epsilon}\\
    &\leq \frac{\mathbb{E}\left[\left(\sum_{(i,j)\in \cN_1(x,k) \times \cN_0(x,k)} w(Y^i, Y^j) -\mathbb{E}[w(Y^i,Y^j)]\right)^2\right]}{k^4\epsilon^2}\,,
\end{align*}
using Chebyshev's inequality.
We will denote $d(i,j) := w(Y^i,Y^j) - \mathbb{E}[w(Y^i,Y^j)]$.
We can rewrite the earlier expression as:
\begin{equation*}
    \frac{\mathbb{E}\left[\left(  \sum_{(i,j)\in \cN_1(x,k) \times \cN_0(x,k)} d(i,j)\right)^2\right]}{k^4\epsilon^2}\,.
\end{equation*}
We split the quadratic expressions into two groups, depending on if there is overlap between the pairs compared in the preference function. We will denote the four-way cross product by $\tilde{\cN} = \cN_1(x,k) \times  \cN_1(x,k) \times  \cN_0(x,k) \times  \cN_0(x,k)$. 
\begin{equation*}
    \frac{1}{k^4\epsilon^2} \sum_{(i, \hat{i}, j, \hat{j}) \in \tilde{N}} \left(\mathbb{I}\{i=\hat{i} \ or \ j=\hat{j}\} + \mathbb{I}\{i\neq\hat{i}\}\mathbb{I}\{j\neq\hat{j}\}\right)\mathbb{E}[d(i,j)d(\hat{i},\hat{j})]
\end{equation*}
\textit{Case 1}: There are $2k^3$ cases where at least $i=\hat{i}$ or $j=\hat{j}$. 
\begin{align*}
    & \mathbb{E}[d(i,j)d(\hat{i},\hat{j})] \\
    & = \cov[w(Y^i,Y^j),w(Y^{\hat{i}}, Y^{\hat{j}})]\qquad \text{(definition of covariance)} \\
    & \leq \var[w(Y^i, Y^j)]\\ 
    & \leq 1 \,,
\end{align*}
since $w$ is assumed to be bounded by 1.
Therefore, the summation may be bounded as follows:
\begin{align*}
    \frac{1}{k^4\epsilon^2} \sum_{(i, \hat{i}, j, \hat{j}) \in \tilde{N}} \mathbb{I}\{i=\hat{i} \ or \ j=\hat{j}\}\mathbb{E}[d(i,j), d(\hat{i}, \hat{j})] 
    &\leq  \frac{2k^3}{k^4\epsilon^2}\\
    & =  \frac{2}{k\epsilon^2}\,.
\end{align*}
\textit{Case 2}: There are $O(k^4)$ cases when $i\neq \hat{i}$ and $j\neq \hat{j}$. Because pairs are \textit{i.i.d.}, if there are no shared members the covariance is zero.
Therefore,
\begin{align*}
    \frac{1}{k^4} \sum_{(i, \hat{i}, j, \hat{j}) \in \tilde{N}} \mathbb{I}\{i\neq\hat{i}\}\mathbb{I}\{j\neq\hat{j}\}\mathbb{E}[d(i,j)d(\hat{i},\hat{j})] &= 
    \frac{1}{k^4} \sum_{(i, \hat{i}, j, \hat{j}) \in \tilde{N}} \mathbb{I}\{i\neq\hat{i}\}\mathbb{I}\{j\neq\hat{j}\}\mathop{Cov}[w(Y^i, Y^j), w(Y^{\hat{i}},Y^{\hat{j}})] \\ 
    &= 0 \,.
\end{align*}
Thus,
\begin{equation*}
    \P \left(|\hat{q}(x) - \mathbb{E}[\hat{q}(x)]| > \epsilon\right) \leq \frac{2}{k\epsilon^2}\,,
\end{equation*}
which converges to $0$ for any fixed $\eps>0$, as long as $k\to\infty$.
This concludes the proof.

\section{Proof of \Cref{prop:EIF}}
\label{app:proof_EIF}

\begin{proposition}\label{prop:EIF_app}
	Let $\psi_\pi(\cP)=V(\pi)$ be our estimand.
	\begin{enumerate}
		\item For $\pi(x)=1$ the constant policy, $\psi_\pi = \psi_W$ is the win proportion, i.e. the likelihood that the primary intervention is preferable to the alternative. Its EIF writes as; for $o=(X,T,Y)$:
		\begin{align*}
			\phi_W(\cP)(o) &= -\psi_W(\cP) + \esp{w(Y(1,X_i)|Y(0,X_i))|X_i=X}\\
			& \quad + \frac{T}{e(X)} \left( \esp{w(Y|Y(0,X_i))} - \esp{w(Y(1,X_i)|Y(0,X_i))|X_i=X} \right)\\
			& \quad + \frac{1-T}{1-e(X)} \left( \esp{w(Y(1, X_i)|Y)} - \esp{w(Y(1,X_i)|Y(0,X_i))|X_i=X} \right) \,,
		\end{align*}
		where $e(x)\eqdef \proba{T_i=1|X_i=x}$.
		\item For $\pi(x)=0$ the constant policy, $\psi_\pi = \psi_L$ is the loss proportion, i.e. the likelihood that the primary intervention is detrimental relative to the alternative, and its EIF writes as; for $o=(X,T,Y)$:
		\begin{align*}
			\phi_L(\cP)(o) &= -\psi_L(\cP) + \esp{w(Y(0, X_i)|Y(1,X_i))|X_i=X}\\
			& \quad + \frac{T}{e(X)} \left( \esp{w(Y(0,X_i)|Y)} - \esp{w(Y(0,X_i)|Y(1,X_i))|X_i=X} \right)\\
			& \quad + \frac{1-T}{1-e(X)} \left( \esp{w(Y|Y(1,X_i))} - \esp{w(Y(0,X_i)|Y(1,X_i))|X_i=X} \right) \,.
		\end{align*}
		\item For general policy $\pi$, the EIF of $\psi_\pi$ is:
		\begin{equation*}
			\phi_\pi(\cP)(o) = -\psi_\pi(\cP) + \pi(x)(\phi_W(\cP)(o) + \psi_W(\cP)) + (1-\pi(x))(\phi_L(\cP)(o) + \psi_L(\cP))
		\end{equation*}
	\end{enumerate}
\end{proposition}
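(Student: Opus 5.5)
We derive the EIF by the point-mass contamination (Gateaux derivative) recipe of \Cref{sec:EIF_correction}, following \citet{hines2022demystifying}. Treat everything under a discrete-data heuristic: write the densities $p(x)$, $e(x)$ and $p(y\mid t,x)$, and perturb $\cP_t=(1-t)\cP+t\delta_o$ with $o=(\tilde x,\tilde t,\tilde y)$. The plan is to prove item 1 first, obtain item 2 by symmetry, and then deduce item 3 by linearity.

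\textbf{Item 1.} By \Cref{hyp:sutva,hyp:unconfoundedness},
\[
\psi_W(\cP)=\int p(x)\iint w(y_1|y_0)\,p(y_1\mid 1,x)\,p(y_0\mid 0,x)\,\dd y_1\,\dd y_0\,\dd x .
\]
Differentiate the product of the three factors by the product rule.

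\emph{Marginal of $X$.} The derivative of $p_t(x)$ gives $\delta_{\tilde x}(x)-p(x)$. This contributes $q_W(\tilde x)-\psi_W(\cP)$.

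\emph{Treated conditional.} We have $p_t(y_1\mid 1,x)=p_t(y_1,1,x)/p_t(1,x)$. The standard computation gives
\[
\partial_t p_t(y_1\mid 1,x)=\frac{\one_{\tilde t=1}\,\delta_{\tilde x}(x)}{p(x)e(x)}\bigl(\delta_{\tilde y}(y_1)-p(y_1\mid 1,x)\bigr).
\]
Integrating against $p(x)\,w(y_1|y_0)\,p(y_0\mid 0,x)$ collapses $x$ to $\tilde x$. This yields
\[
\frac{\tilde t}{e(\tilde x)}\Bigl(\esp{w(\tilde y|Y(0,\tilde x))}-q_W(\tilde x)\Bigr).
\]

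\emph{Control conditional.} The same computation gives the $\frac{1-\tilde t}{1-e(\tilde x)}$ term with $\esp{w(Y(1,\tilde x)|\tilde y)}$.

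Summing the three contributions gives $\phi_W$. Using $p_W$ from \Cref{prop:EIF} compresses the two correction terms, since $p_W(x,1,y)=\esp{w(y|Y(0,x))}$ and $p_W(x,0,y)=\esp{w(Y(1,x)|y)}$.

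\textbf{Item 2} follows verbatim by replacing $w$ with $w_L$. Equivalently, swap the roles of arms 0 and 1 inside $w$, which yields $p_L$ and $q_L$.

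\textbf{Item 3.} Write $V(\pi)=\esp{\pi(X)q_W(X)}+\esp{(1-\pi(X))q_L(X)}$. This is the same computation as in item 1, with the outer integrand multiplied by the fixed function $\pi(x)$ or $1-\pi(x)$.

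Because every non-marginal derivative term collapses onto $x=\tilde x$, those terms simply acquire the factor $\pi(\tilde x)$ or $1-\pi(\tilde x)$. The marginal term becomes $\pi(\tilde x)q_W(\tilde x)+(1-\pi(\tilde x))q_L(\tilde x)-\psi_\pi$.

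Regrouping, this equals $-\psi_\pi+\pi(\phi_W+\psi_W)+(1-\pi)(\phi_L+\psi_L)$, which matches item 3 and the displayed formula of \Cref{prop:EIF}.

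\textbf{Main obstacle.} The delicate step is the treated/control conditional derivative. First, the bilinear dependence on the two conditionals must be handled correctly. Only one of them is perturbed by a given observation, because of the indicators in $\tilde t$. Second, note that the $t^2$ cross term vanishes at $t=0$.

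I would also justify the heuristic Dirac calculation properly. To do so, I would compute the pathwise derivative along smooth parametric submodels and check that
\[
\tfrac{\dd}{\dd\epsilon}\psi(\cP_\epsilon)=\esp{\phi\,S_\epsilon},
\]
where $S_\epsilon$ is the score. Positivity (\Cref{hyp:positivity}) and boundedness of $w$ then ensure that $\phi$ is square-integrable.
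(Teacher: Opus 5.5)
Your proposal is correct and takes essentially the same route as the paper. Both compute the Gateaux derivative of the factorized integral $\int p(x)\,p(y\mid x,0)\,p(y'\mid x,1)\,w(y'|y)$ along the contamination $(1-h)\cP+h\delta_o$, and both obtain items 2 and 3 by repeating that computation. Your explicit product-rule treatment of the conditional densities, with normalizer $e(\tilde x)$ (which the paper's derivation writes as $\pi(X)$, clashing with the policy notation), makes the paper's first-order expansion more transparent.
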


\begin{proof}[Proof of \Cref{prop:EIF_app}]
	We have:
	\begin{align*}
		\Psi_W(\cP) & \eqdef \esp{\esp{w(Y_i(1,X_i)|Y_i)|X_i,T_i=0}}\\
		& = \int \dd\P_{X_i}(x) \dd \P_{Y_i|X_i=x,T_i=0}(y) \dd \P_{Y_i|X_i=x,T_i=1}(y')w(y'|y)\\
		& = \int p(x) p(y|x,0) p(y'|x,1) w(y'|y) \dd x \dd y \dd y'\\
		& = \int p(x) \frac{p(x,0,y)}{p(x,0)} \frac{p(x,1,y')}{p(x,1)} w(y'|y) \dd x \dd y \dd y'\,,
	\end{align*}
	where $p$ is the density of $\cP$ with respect to the Lebesgue measure if $\frac{\dd \cP}{\dd \lambda}<\infty$, and this expression naturally extends in the general case.
	Let $h\in(0,1)$ (that will be taken infinitesimally small) and $o = (X,T,Y)$.
	Let $\cP_h=(1-h)\cP_h\delta_{X,T,Y}$ be the mixture of these two distributions.
	We have:
	\begin{align*}
		\Psi_W(\cP_h) &= \int p_h(x) \frac{p_h(x,0,y)}{p_h(x,0)} \frac{p_h(x,1,y')}{p_h(x,1)} w(y'|y) \dd x \dd y \dd y'\,,
	\end{align*}
	where $p_h$ is the density of $\cP_h$. Note that we have $p_h = (1-h) p + h \delta_{X,T,Y}$, leading to:
	\begin{align*}
		\Psi_W(\cP_h) & = (1-h)\Psi_W(\cP) + o(h) + h \int p(y|X,0)p(y'|X,1)w(y'|y)\dd y\dd y' \\
		& \quad + h \frac{1-T}{1-\pi(X)} \int p(y'|X,1)w(y'|Y)\dd y' - h \frac{1-T}{1-\pi(X)} \int p(y|X,0)p(y'|X,1)w(y'|y)\dd y\dd y' \\ 
		& \quad + h \frac{T}{\pi(X)} \int p(y|X,0)w(Y|y)\dd y - h \frac{T}{\pi(X)} \int p(y|X,0)p(y'|X,1)w(y'|y)\dd y\dd y' \,.
	\end{align*} 
	Because $\Psi_W(\cP_h)=\Psi_W(\cP) + h \Phi(\cP)(o) +o(h)$, we can identify $\Phi_W$ as in \Cref{prop:EIF}.
	The proof for $\Phi_L$ and general policies $\Phi_\pi$ proceeds exactly in the same way.
\end{proof}

\section{Detailed Experimental Section}
\label{app:exps}

\subsection{Synthetic simulation details}
\label{app:synth_simulation_details}

We explore in our synthetic experiments the benefits of applying our CPTE methodology to preference-based policy learning and evaluation. We take as a use case PNS over a continuous outcome, $\esp{w(Y_i(1), Y_j(0))|X_i=X_j=x} = \proba{Y_i(1)>Y_j(0)|X_i=X_j=x}$.

In some cases a CATE-based policy suffices to optimize a CPTE target. For example, CATE estimation is sufficient to learn a binary PNS preference because the Bernoulli distribution is characterized by its expectation. However, complex potential outcome distributions can result in divergence between an optimal CATE policy and an optimal preference-based policies, such as PNS. We give a real-world example below.

To create our experimental data, we simulate a mix of binary and continuous features. The potential outcomes are generated from a combination of linear effect modifiers and a complex treatment effect. We present two scenarios; one with homogeneous treatment effects and the other with a simple heterogeneous treatment effect. The simulation maintains \Cref{hyp:positivity,hyp:sutva,hyp:unconfoundedness}.
To test our CPTE methodology against classical CATE-based policies, the complex treatment effect was simulated such that the two policies disagree on every unit. This can occur when one of the potential outcome distributions has a heavy tail. Real-world applications of this could be risky surgeries that benefit certain individuals greatly but are harmful for most patients. In this case, the minority of patients who benefit greatly may skew the average, making the riskier surgery appear more beneficial than a more conservative option whose likelihood of benefiting the patient is better. A CATE policy would recommend the risky surgery; a CPTE policy would not.\\
Our simulation mimics this scenario by drawing one potential outcome from a gaussian distribution and the other potential outcome from a bimodal normal distribution. Figures~\ref{fig:large_div} and~\ref{fig:small_div} illustrate two examples of potential outcome distributions for which there is divergence between a PNS-based policy and a CATE-based policy. Other more subtle examples can be simulated by using heavy-tail distributions such as Gamma and Pareto. 

\begin{figure}[H]
    \centering
    \begin{subfigure}[t]{0.4\textwidth}
        \centering
        \includegraphics[width=0.9\linewidth]{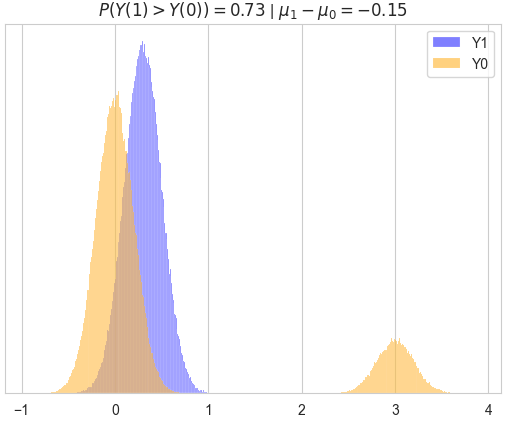}
        \caption{Large divergence}
        \label{fig:large_div}
    \end{subfigure}%
    \hfill
    \begin{subfigure}[t]{0.4\textwidth}
        \centering
        \includegraphics[width=0.9\linewidth]{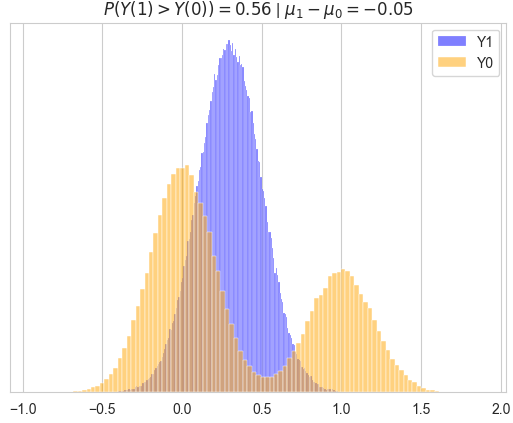}
        \caption{Small divergence}
        \label{fig:small_div}
    \end{subfigure}
    \caption{Potential outcome distributions}
    \label{fig:potential_outcome_distribution}
\end{figure}

In the homogeneous setting, all units potential outcomes are drawn from the same potential outcome distribution modified by a linear combination of the features. In the heterogeneous setting, an observed binary feature was used to decide which potential outcome was assigned the bimodal distribution. We include more details in \cref{app:synth_exp_data_generating_process}.

For each simulation we perform policy learning and evaluation with three distributional learning paradigms - KNN, linear regression and random forest. These are evaluated on a held-out set (N=10000) generated using a random seed that is not used for any of the training data simulations. Training data is simulated 50 times for each N. We sample a training set for each $N\in [30, 100, 1000, 10000]$ and train our distributional and baseline estimation methods. Using their estimates for the conditional PNS, we evaluate three policy rules. The first, OTR (\cref{sec:OTR}), uses the $\delta(x) > 0$ rule; $\delta(x)=q_W(x)-q_L(x)$ for CPTE and $\delta(x)=\esp{Y_i(1)-Y_i(0)|x}$ for CATE. The second policy rule, policy value maximization (\cref{sec:plug_in_value}) uses a logistic regression model to learn $f:X\rightarrow \mathbf{I}{(\delta(x) > 0)}$ over the training data. The policy follows the prediction. The third policy rule also uses a logistic regression model to decide policy, but learns over the EIF correction of $\delta(x)$ (\cref{eq:1-step_policy}).

We compare our CPTE approach using distributional estimation to a standard approach using CATE, which serves as our baseline. Distributional KNN (D-KNN), estimating $\cP_{Y_i|T_i,X_i}$ is compared to regular KNN, estimating $\esp{Y_i|T_i, X_i}$. Likewise, Distributional Linear Regression (DLR) using quantile regression is compared to regular ridge regression; Distributional Random Forest (DRF) is compared to regular random forest.

\subsubsection{Synthetic Data Generating Process}
\label{app:synth_exp_data_generating_process}
We generate features $X \in \mathbb{R}^{10}$ using 8 continuous variables and two binary variables. We randomly sample linear coefficients to create a shared baseline dependent on $X$ for the potential outcomes. In the observational setting, we choose three features (one discrete and two continuous) to act as confounders and use a linear transformation to create the logit for treatment arm selection. We ensure that overlap exists and set the expected treatment rate to 0.5.

\begin{itemize}
    \item Continuous features: $X_{cont}\sim \mathcal{N}(0,1)$
    \item Discrete features: $X_{disc}\sim Ber(0.5)$
    \item Homogeneous treatment assignment: $T\sim Ber(0.5)$
    \item Heterogeneous treatment coefficients: $\beta_T\sim U(0.2, 0.6)$ for one discrete and two continuous features acting as confounders, otherwise $0$.
    \item Heterogeneous treatment assignment: $T\sim Ber(X^T\beta_T)$
    \item Baseline outcome coefficients: $\beta_Y\sim U(0.1, 0.5)$ for all features. 
    \item Shared baseline Y: $Y_{baseline}=X^T\beta_Y$
\end{itemize}

We define two distributions for the potential outcome space: a bimodal mixture of Gaussians and a simple Gaussian distribution. The bimodal distribution has a higher expected value than the simple Gaussian, as visualized in \cref{fig:large_div} and \cref{fig:small_div}. However, the probability that a random variable of the simple Gaussian distribution is higher than the bimodal distribution is above 0.5. We let the $b\in[0,1]$ parameter control how many are part of the lower Gaussian and $1-b$ how many are part of the higher Gaussian. Referencing the risky surgery example mentioned above, $1-b$ represents how many patients are in the minority group that greatly benefited from the risky surgery.\\
Potential outcome parameters:

\begin{itemize}
    \item Shared variance: $\sigma = 0.2$
    \item Simple Gaussian parameters: $\mu_s=0.3$
    \item Simple Gaussian: $y_{simple} \sim \mathcal{N}(\mu_s, \sigma^2)$
    \item Mixture of Gaussians parameters: $\mu_{m,low}=0, \mu_{m, high}=3, b=0.85$
    \item Mixture of Gaussians: $y_{mixture} =  b*\mathcal{N}(\mu_{m,low}, \sigma^2) + (1-b)\mathcal{N}(\mu_{m,high}, \sigma^2)$ 
\end{itemize}

We create the potential outcomes by combining the linear baseline outcome with these distributions. In the homogeneous setting,
\begin{equation*}
    Y(1) = y_{baseline} + y_{simple}
\end{equation*}
\begin{equation*}
    Y(0) = y_{baseline} + y_{mixture}
\end{equation*}
Therefore, for all units the optimal PNS policy chooses T=1 and the optimal CATE policy chooses T=0.

We denote one of the binary features as $X_0$.
In the heterogeneous setting,
\begin{equation*}
    Y(1) = y_{baseline} + \mathbf{I}(X_0=1)y_{simple} + \mathbf{I}(X_0=0)y_{mixture}
\end{equation*}
\begin{equation*}
    Y(0) = y_{baseline} + \mathbf{I}(X_0=0)y_{simple} + \mathbf{I}(X_0=1)y_{mixture}
\end{equation*}
Therefore, the optimal PNS and CATE policies depend on the binary feature $X_0$.

\subsubsection{Experiment Details}
\label{app:synth_experiment_details}

\textbf{Evaluation.}
Evaluation was performed on a large hold-out set (N=50,000). $X$ and $y_{simple}, y_{mixture}$ were generated using a random seed not used by any of the training sets. The parameters for $\beta_y$ were generated using a random seed shared across all datasets. Therefore, $y_{baseline}$ comes from the same distribution for both the evaluation data and the training data.\\
\textbf{Training.}
For each $N\in\{30, 100, 1000, 10000\}$ the training data was simulated 50 times.  This allows us to observe the performance of the models over exponentially growing datasets. Each time $X$ and $y_{simple}, y_{mixture}$ were generated using a different random seed.\\
For each learning paradigm - KNN, linear regression and random forests - we train a distributional model and a baseline model. The distributional model estimates the distribution of the potential outcomes, $\hat{Y}_i(1), \hat{Y}_i(0)|X_i=x$  under \Cref{hyp:positivity,hyp:sutva,hyp:unconfoundedness}. Distributional KNN (DKNN) uses the nearest neighbors to approximate the distribution of the potential outcomes. We test nearest neighbor (k=1) and logarithmically growing k's. The distributional linear and forest regressors approximate the inverse cdf using quantiles and then sample 1000 times from the approximate potential outcome distribution. We denote the number of samples used as $S$. The baseline learner estimates the expected value of the potential outcome under \Cref{hyp:positivity,hyp:sutva,hyp:unconfoundedness}. These estimations allow us to perform policy learning and evaluation. \\
Policies are learned in three ways:
\begin{itemize}
    \item Plug-in of the OTR (\cref{sec:OTR})
    \item Plug-in Policy Value Optimization (\cref{sec:plug_in_value})
    \item EIF Policy Value Optimization (see \cref{sec:1-step_EIF})
\end{itemize}

\textbf{Policy Learning \& Policy evaluation}
For each learning paradigm, we compare the CPTE-policies to the CATE-policies using the oracle policy value on the evaluation set. This allows us to understand which policy perform best. The full results can be found in \cref{fig:synthetic_policy_learning_grid} and are analyzed in the next section. 

Policy evaluation was performed by using each paradigm's estimates on the evaluation set and comparing the optimal policy's value using them. The CATE policy value was calculated by plugging in the estimated expected values into the preference function (\cref{win_function_def}). 

\textbf{Model Parameters} 
The baseline linear regression model is Ridge with alpha=1.0.

The random forest parameters depend on the size of N. These were found by performing a hyperparameter search for each N using a random seed not used during training. This allowed us to understand reasonable parameters given different training set sizes.

\begin{table}[H]
\centering
\caption{Random Forest Regressor hyperparameter settings by sample size $N$.}
\label{tab:rfr_configs}
\begin{tabular}{lcccc}
\toprule
Condition on $N$ & Max Depth & Max Features & Min Samples Split & \# Trees ($n_{\text{estimators}}$) \\
\midrule
$N \geq 10000$            & 25 & 0.35 & 5 & 500 \\
$100 < N < 10000$      & 15 & 0.50 & 7 & 400 \\
$N \leq 100$          & 15 & 0.60 & 5 & 50 \\
\bottomrule
\end{tabular}
\end{table}

\subsubsection{Synthetic Experiments: Detailed Results}
\textbf{Policy learning.}
As we see in \Cref{fig:synthetic_policy_learning_grid}, the distributional approaches tend to converge to the optimal policy value. We also note that the variance, represented by bootstrapped 95\% confidence intervals, tends to get smaller for the distributional approaches. In contrast, the baseline learners tend to converge to the incorrect CATE policy and the variance tends to be larger. The simulation is linear; resulting in the quick convergence of all Distributed Linear Regression (DLR) policies to the optimal policy in all settings and for all policy learning methods. The baseline plug-in and direct policy value optimization converge to the optimal CATE policy, the opposite of the optimal CPTE policy. We note that the EIF PNS correction of the CATE facilitates above-random performance in some of the settings. The Distributed KNN (DKNN) policies also converge towards the optimal policy value. Despite being misspecified to the linear simulation, the DKNN plug-in policy correctly estimates the PNS, allowing the correct policy to be learned as a logarithmic rate as anticipated by \Cref{thm:knn}. Unlike DKNN, the Distributional Random Forest (DRF) method struggles to overcome its misspecification in the heterogeneous setting. Although not converging to the CATE policy like the baseline random forest, it struggles to learn the correct distribution. This results in a plug-in that stays close to a random policy because the heterogeneity is equally balanced. Despite this inadequacy, the policy value optimization methods converge to the optimal policy.

\FloatBarrier
\label{app:synth_detailed_results}
\begin{figure*}
  \centering
  \subcaptionbox{DLR\label{fig:dlr_rct_homo}}%
    {\includegraphics[width=0.30\textwidth]{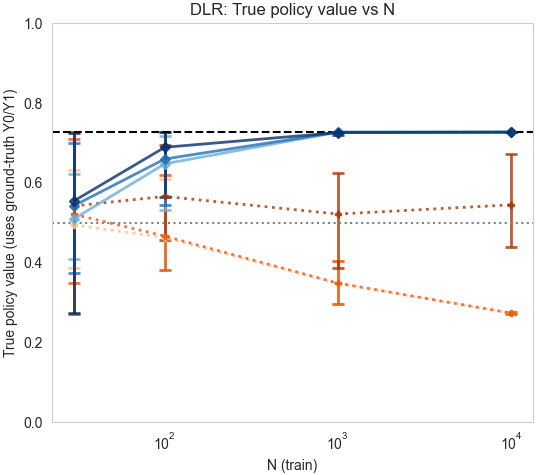}}
  \hfill
  \subcaptionbox{DKNN\label{fig:dknn_rct_homo}}%
    {\includegraphics[width=0.30\textwidth]{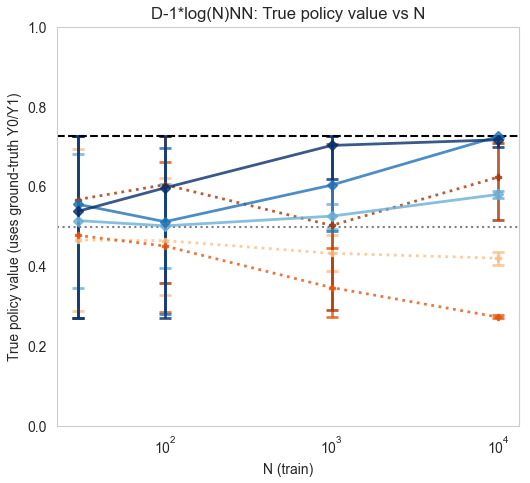}}
  \hfill
  \subcaptionbox{DRF\label{fig:drf_rct_homo}}%
    {\includegraphics[width=0.30\textwidth]{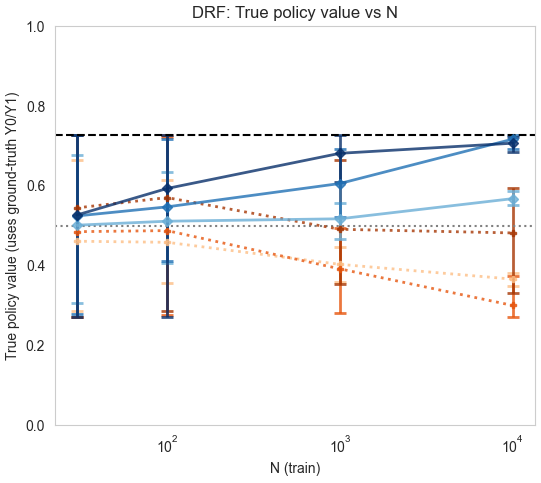}}

  \medskip
  \subcaptionbox{DLR\label{fig:dlr_rct_hetero}}%
    {\includegraphics[width=0.30\textwidth]{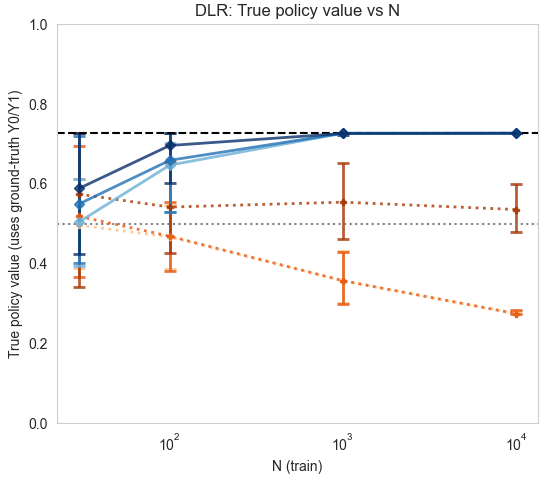}}
  \hfill
  \subcaptionbox{DKNN\label{fig:dknn_rct_hetero}}%
    {\includegraphics[width=0.30\textwidth]{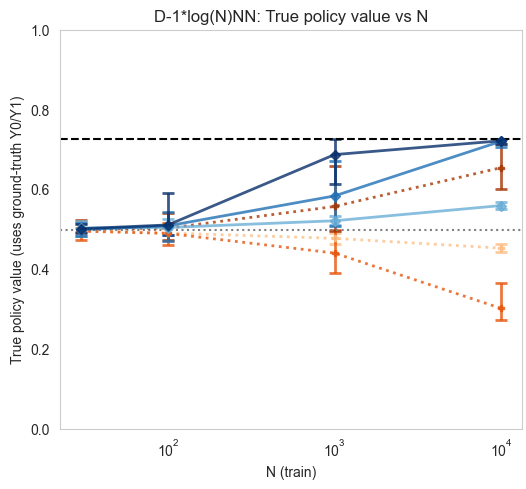}}
  \hfill
  \subcaptionbox{DRF\label{fig:drf_rct_hetero}}%
    {\includegraphics[width=0.30\textwidth]{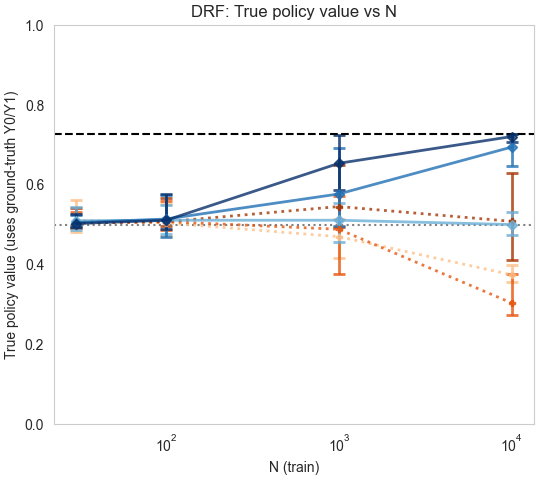}}

  \medskip
  \subcaptionbox{DLR\label{fig:dlr_obs_homo}}%
    {\includegraphics[width=0.30\textwidth]{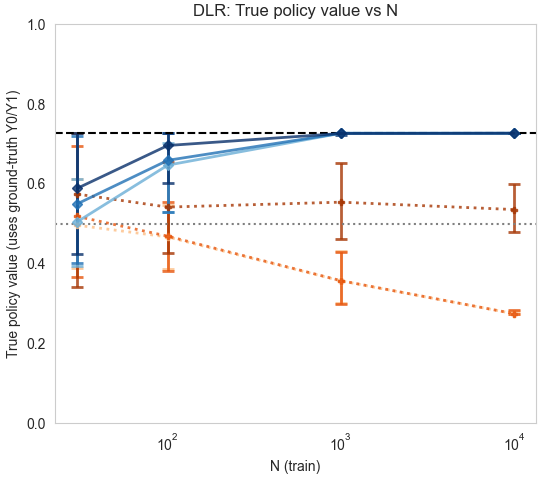}}
  \hfill
  \subcaptionbox{DKNN\label{fig:dknn_obs_homo}}%
    {\includegraphics[width=0.30\textwidth]{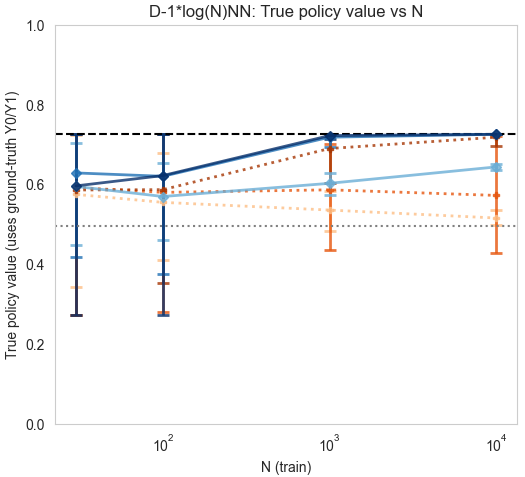}}
  \hfill
  \subcaptionbox{DRF\label{fig:drf_obs_homo}}%
    {\includegraphics[width=0.30\textwidth]{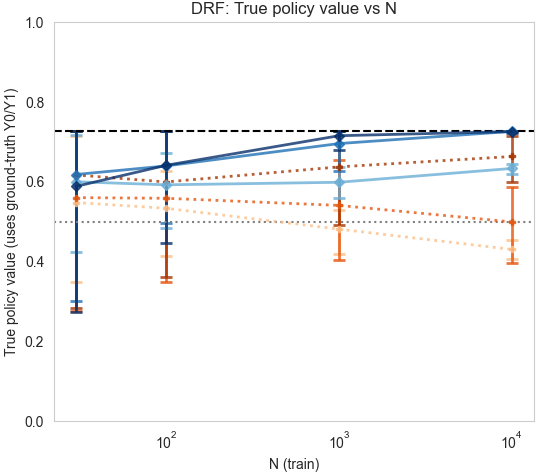}}
    
  \medskip
  \subcaptionbox{DLR\label{fig:dlr_obs_hetero}}%
    {\includegraphics[width=0.30\textwidth]{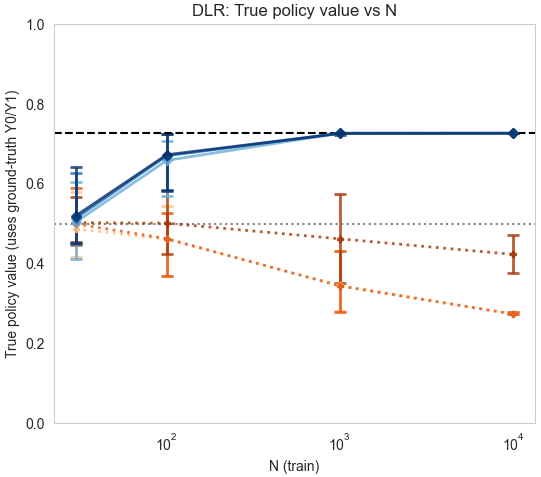}}
  \hfill
  \subcaptionbox{DKNN\label{fig:dknn_obs_hetero}}%
    {\includegraphics[width=0.30\textwidth]{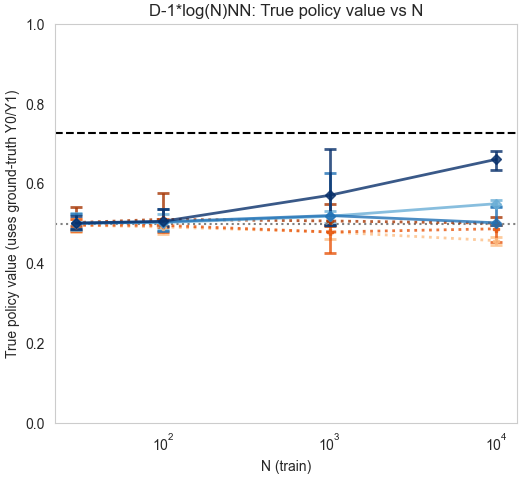}}
  \hfill
  \subcaptionbox{DRF\label{fig:drf_obs_hetero}}%
    {\includegraphics[width=0.30\textwidth]{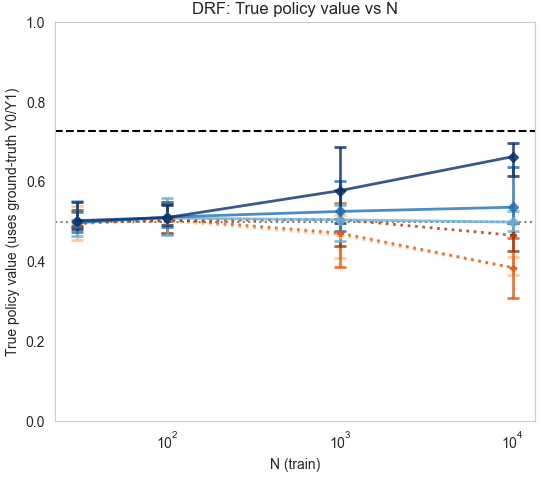}}

  \caption{Policy Learning with PNS policy value, no correlations. (a–c) Homogeneous RCT setting; (d–f) Heterogeneous RCT setting;
  (g-i) Homogeneous Observational setting; (j-l)Heterogeneous Observational setting.
  \textbf{Legends} as in \Cref{fig:synthetic_policy_learning_heterogeneous_RCT_uncorr}.
  \label{fig:synthetic_policy_learning_grid}}
\end{figure*}

\textbf{Policy evaluation.} 
Beyond the distributional method's ability to learn policies over complex potential outcome distributions, they also enable correctly-specified policy evaluation for complex preference functions like PNS. We evaluate the optimal policy in term of PNS using PNS estimates from the distributional and baseline models. As can be seen in \cref{fig:policy_eval_synth}, the distributional estimates converge towards the optimal policy. The correctly specified Distributional Linear Regression converges quickly to the correct policy value. DKNN and DRF tend towards the correct policy value. In contrast, the baseline estimation using the expected potential outcomes diverge to 0 and 1. We note that the true policy value is 0.72.

\begin{figure*}
  \centering
  \subcaptionbox{DLR\label{fig:policy_eval_dlr}}{%
    \includegraphics[width=0.9\linewidth]{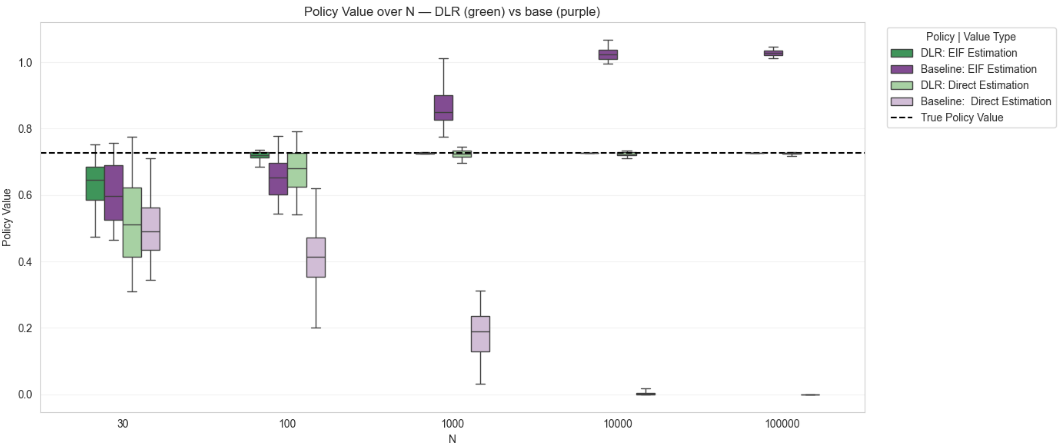}} \\[0.8em]
  \subcaptionbox{DKNN\label{fig:policy_eval_dknn}}{%
    \includegraphics[width=0.9\linewidth]{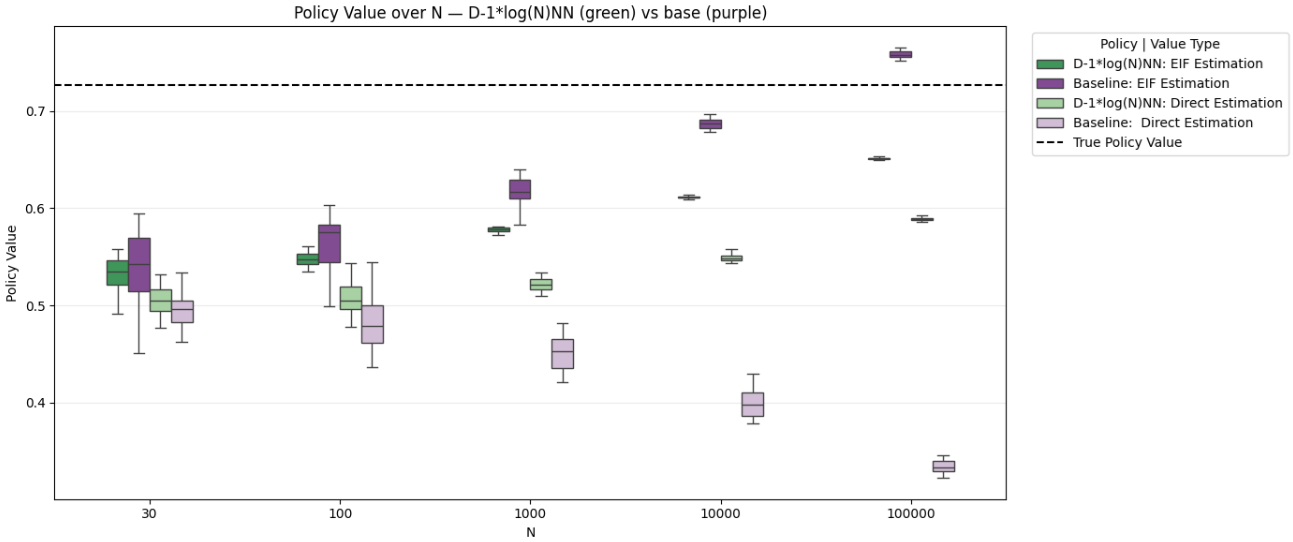}} \\[0.8em]
  \subcaptionbox{DRF\label{fig:policy_eval_drf}}{%
    \includegraphics[width=0.9\linewidth]{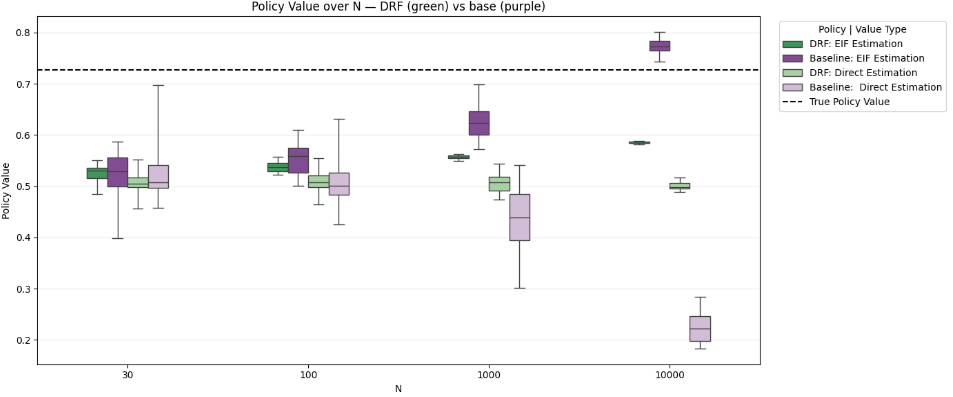}}

  \caption{Policy Evaluation of optimal policy in the synthetic, heterogeneous RCT setting}
  \label{fig:policy_eval_synth}
\end{figure*}

\textbf{Correlation between potential outcomes.} \label{app:synth_correlation}
We also examine when the assumption made in \Cref{lem:treatment_effect_modifiers}, i.e. all structural treatment effect modifiers must be observed, is violated. This assumption is unlikely to hold; these experiments can help us understand the impact of being far from this assumption. We test this by introducing correlations between the potential outcomes that are not explained by the available features.
Continuing the surgery example, patients for whom the risky surgery is successful may have a genetic component that also affects their reaction to the more conservative treatment.

We show that in our simple synthetic setting, policy learning appears to be more robust to the violation of this assumption than policy evaluation (\Cref{fig:policy_learning_correlation}). For the policy learning, we focus on the successful DKNN and distributional linear regression in the difficult heterogeneous setting. We add significant amount of positive and negative correlation between the potential outcomes with empirical Pearson correlations of 0.7 and -0.7 respectively.



\begin{figure*}
  \centering
  \subcaptionbox{DLR: -0.7\label{fig:dlr_hetero_neg07}}%
    {\includegraphics[width=0.30\textwidth]{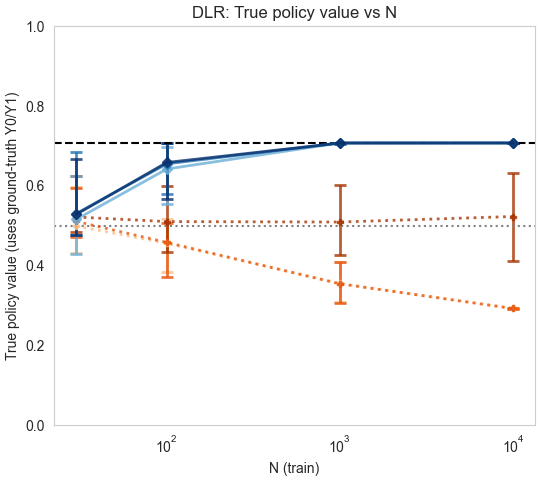}}
  \hfill
  \subcaptionbox{DLR: 0\label{fig:dlr_rct_hetero}}%
    {\includegraphics[width=0.30\textwidth]{dlr_rct_hetero.png}}
  \hfill
  \subcaptionbox{DLR: 0.7\label{fig:dlr_hetero_07}}%
    {\includegraphics[width=0.30\textwidth]{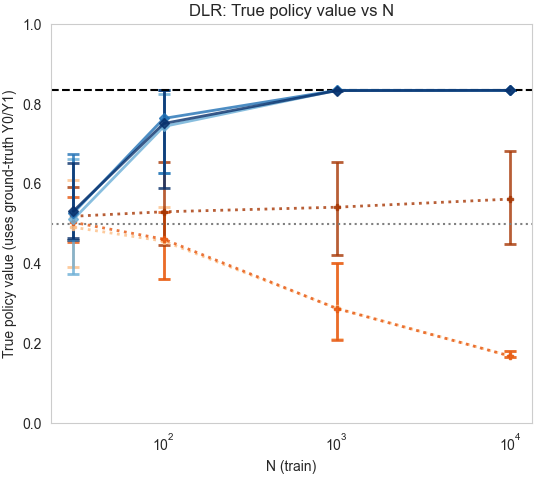}}

  \medskip
  \subcaptionbox{DKNN: -0.7\label{fig:dknn_hetero_neg07}}%
    {\includegraphics[width=0.30\textwidth]{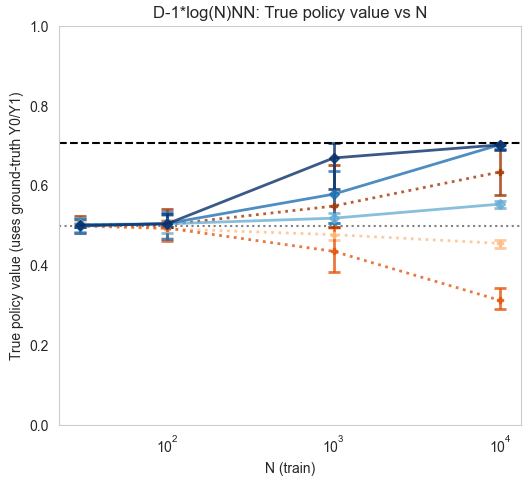}}
  \hfill
  \subcaptionbox{DKNN: 0\label{fig:dknn_rct_hetero}}%
    {\includegraphics[width=0.30\textwidth]{dknn_rct_hetero.png}}
  \hfill
  \subcaptionbox{DKNN: 0.7\label{fig:dknn_hetero_07}}%
    {\includegraphics[width=0.30\textwidth]{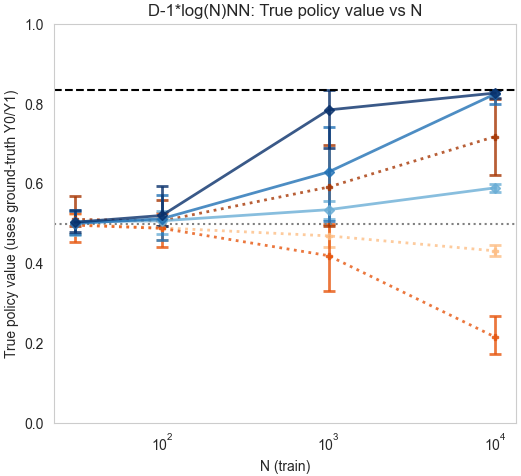}}

  \caption{Policy Learning with correlation between potential outcomes in the synthetic, heterogeneous RCT setting. (a–c) Linear Regression; (d–f) DKNN.   \textbf{Legends} as in \Cref{fig:synthetic_policy_learning_heterogeneous_RCT_uncorr}.}
  \label{fig:policy_learning_correlation}
\end{figure*}

\begin{figure}
  \centering
  {%
    \includegraphics[width=0.32\linewidth]{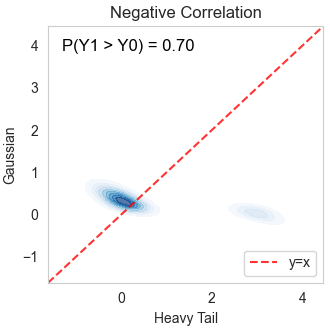}} 
    \hfill
  {%
    \includegraphics[width=0.32\linewidth]{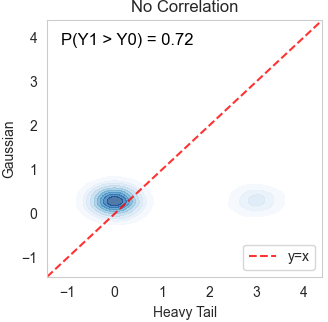}}
  {%
    \includegraphics[width=0.32\linewidth]{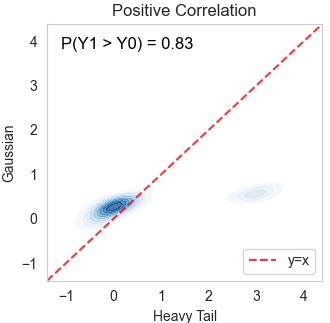}}
  \caption{Visualizing PNS, $P(Y1>Y0)$, with correlated potential outcomes $Y1$ and $Y0$. In our experiment, $Y1$ is Gaussian (y-axis) and $Y0$ has a heavy tail (x-axis). }
  \label{fig:potential_outcome_correlation}
\end{figure}

Policy learning is hardly affected. The visual in \Cref{fig:potential_outcome_correlation} shows how the correlation only affects a small part of the population (blue concentric circles) which cross the decision boundary (red line.) However, the policy value estimate is significantly effected. We show in \Cref{fig:policy_eval_correlated} policy evaluation using the correctly specified linear models. Under correlation, the policy values converge to the wrong policy value. They converge to the policy value without correlation.

\begin{figure}[t]
  \centering
  \subcaptionbox{DLR - negative correlation\label{fig:policy_eval_dlr_neg_corr}}{%
    \includegraphics[width=0.9\linewidth]{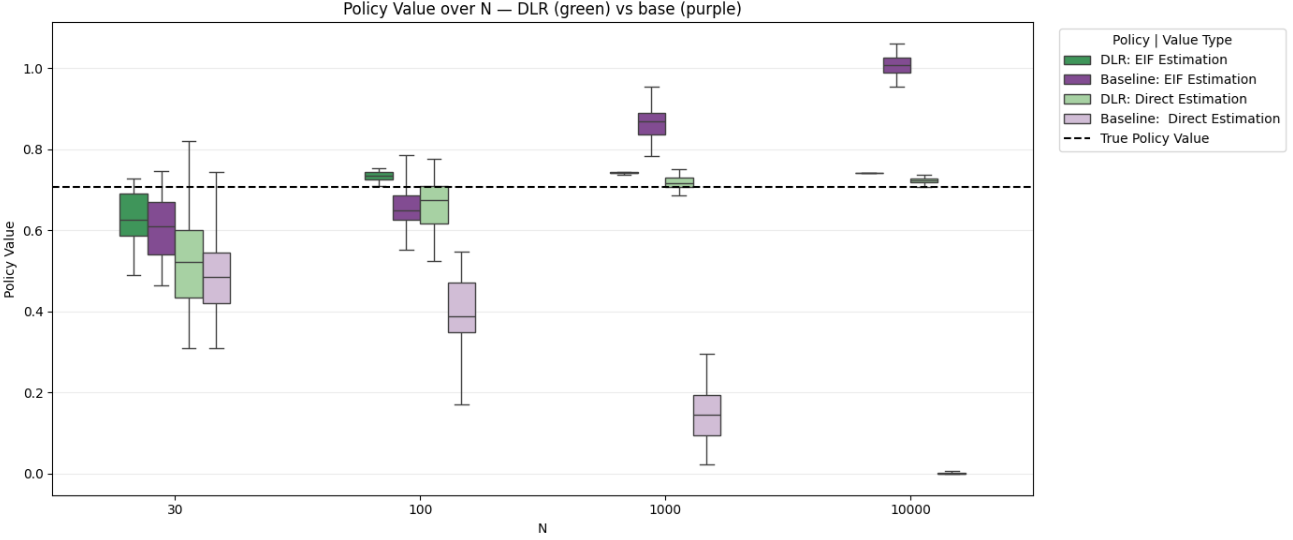}} \\[0.8em]
  \subcaptionbox{DLR\label{fig:policy_eval_dlr}}{%
    \includegraphics[width=0.9\linewidth]{policy_eval_dlr_hetero.png}} \\[0.8em]
  \subcaptionbox{DLR - positive correlation\label{fig:policy_eval_dlr_pos_corr}}{%
    \includegraphics[width=0.9\linewidth]{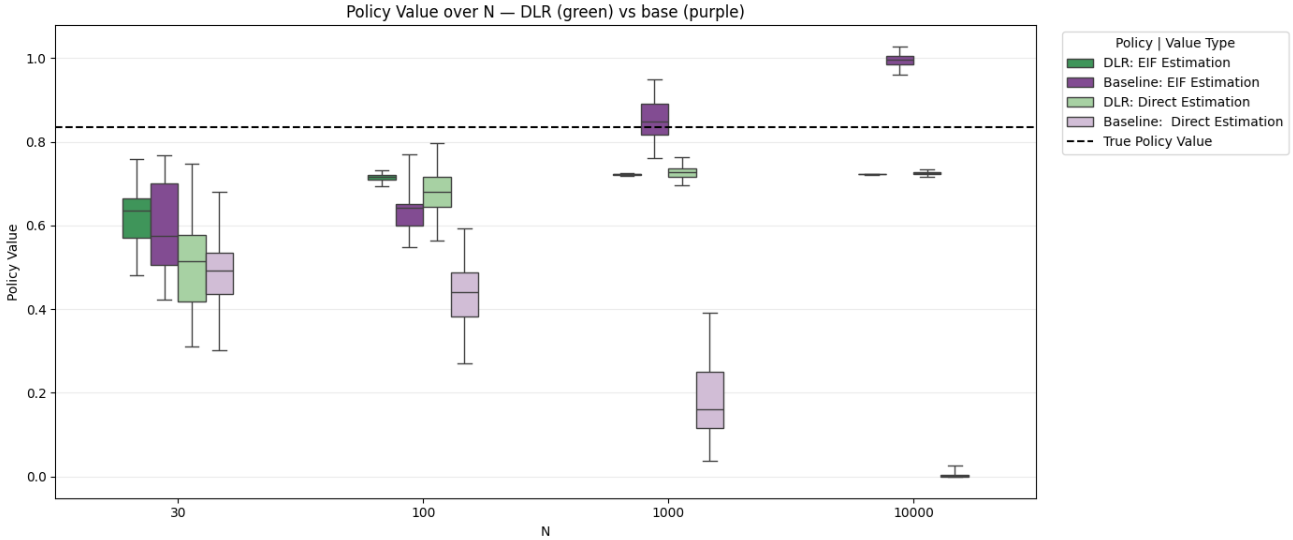}}

  \caption{Policy Evaluation of optimal policy with correlation in the semi-synthetic setting}
  \label{fig:policy_eval_correlated}
\end{figure}

\subsection{Semi-Synthetic Simulation details}
\label{app:semi_synthetic}
Validating causal-inference methods is difficult because the counterfactual ground-truth necessary for validation is never observed. This phenomena is called the \textit{fundamental challenge of causal inference} \citep{Holland1986}. The natural solution is to verify methods under ideal circumstances, such as random control trials. Random control trials provide unbiased estimates for CATE-estimates because they are unconfounded by construction. While CPTE-estimates also benefit from an RCT's inherent identifiability, they are still statistically problematic. Means can be reasonably estimated for small substratum. However, estimating the conditional distribution necessary for CPTE requires more data. This is especially true because the CPTE is particularly suited to tasks with complex conditional potential outcome distributions, as we have shown for PNS in the simulation above.

Therefore, to further validate our methods we turn to a semi-synthetic experiment. We use the Tennessee STAR study \citep{star1990state} for the starting point. The Tennessee Student/Teacher Achievement Ratio (STAR) study is a randomized experiment intended to measure the effect of class size on student test scores, namely math, reading and listening. Launched in 1985, the study followed students in grades K-3 recording their grades at the end of each academic year. Students were randomly assigned to one of three classroom types - small (13-17 students), regular (22-25) and regular with a full-time aide (22-25). Each school housed at least one classroom of each type.
\citep{DVN/SIWH9F_2008}. We apply our methods to the `win' function, a preference function (\cref{win_function_def}), which is often aggregated into various `win' statistics \citep{dong2023win}. The `win' function captures preference over hierarchical outcomes. \\
We work with a primary, binary outcome and a secondary, continuous outcome. This scenario is common in medical literature, where the primary outcome is often survival - binary or time-to-event - as the primary outcome and a secondary, continuous outcome such as blood pressure.  Our primary outcome is experiment attrition. We looked at kindergarten students found in the study and defined attrition as those students who are not found in the study in the first grade. Our secondary outcomes are the student's math scores. We define a `win' as students who are found in the study in first grade and have a high math score. \\
After removing students without a math score, we there are 1,762 student in small classrooms, 2,032 in regular classrooms and 2,077 in regular classrooms with full time aides. 454 students were removed due to a missing math score. We combine the two treatment arms with regular classrooms because the `win' function is only defined between two treatments. Although this may violate SUTVA, one of the study's findings was that the full-time aides were not significant \citep{DVN/SIWH9F_2008, krueger1999experimental}. 

As stated previously, calculating the conditional `win' function empirically over the RCT is statistically problematic, as many small substratum result in poorly estimated `win' functions despite being causally unbiased. Therefore, we create synthetic outcomes. We strive to replicate the original outcome distributions. We do so by training an MLP over the outcomes and then sampling from the MLP using MC-dropout \citep{gal2015dropout}, resulting in a Bayesian approximation of conditional distributions.

We then use these estimates of the conditional distributions as the ground-truth for the optimal policy and the optimal policy value. We evaluate our methods in a similar fashion to the synthetic experiments, training distributional and baseline models. We then use these models' predictions as plug-in estimators for the `win' function and predict the policy using the same policy learning methods as in the synthetic experiment. \\
In the synthetic experiments, we showed the advantages of a PNS-based policy over a mean-based policy for complex potential outcome distribution. The advantage of the `win' function is evident even without that complexity. Because the `win' function is composed of hierarchical outcomes, a successful `win'-based policy discerns between units for which the benefits of the secondary outcome outweigh the risks in the primary outcome. The baseline learners, which estimate the expected values of outcomes, will likely only prioritize the primary outcome. In the binary case, only when the estimated expectations are equal will the secondary outcome be considered.

We present our results on the semi-synthetic STAR data using the original features and our simulated outcomes. In contrast to the synthetic experiment, this more realistic simulation contains preferences close to the decision boundary, making policy learning more difficult.

\begin{figure*}[t]
  \centering
  \subcaptionbox{DLR\label{fig:dlr_semi}}%
    {\includegraphics[width=0.30\textwidth]{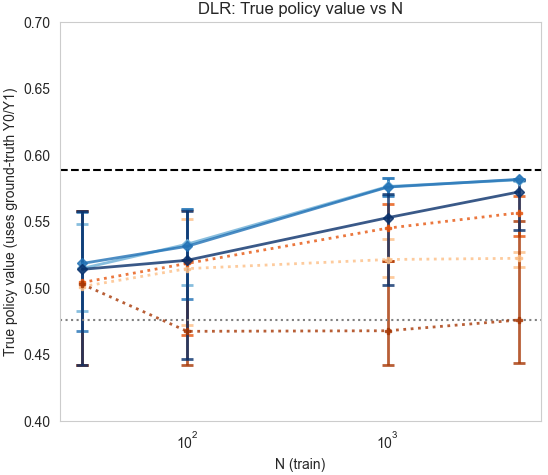}}
  \hfill
  \subcaptionbox{DKNN\label{fig:dknn_semi}}%
    {\includegraphics[width=0.30\textwidth]{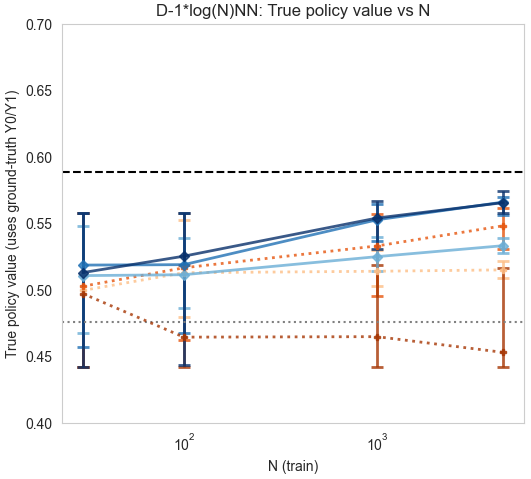}}
  \hfill
  \subcaptionbox{DRF\label{fig:drf_semi}}%
    {\includegraphics[width=0.30\textwidth]{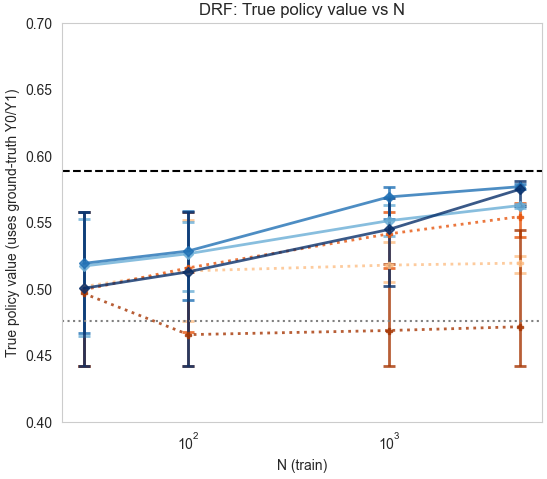}}

  \caption{Policy Learning with win policy value in the semi-synthetic setting.
    \textbf{Legends} as in \Cref{fig:synthetic_policy_learning_heterogeneous_RCT_uncorr}.}
\label{fig:semi_synthetic_policy_learning_grid}
\end{figure*}

We find in \cref{fig:semi_synthetic_policy_learning_grid} that as the size of our training data grows, the distributional methods best learn the optimal policy. In our graphs, our results with the oracle policy value (black) and by a random policy (gray). We also add a benchmark for the optimal primary outcome policy (brown), i.e. the policy value if we would base our policy only on the primary outcome, not the `win' preference function. 
We find that all of the baseline plug-in policies (yellow) do not even achieve the optimal primary outcome policy. The distributional policies (blues) and the direct policy value optimization policy for the baseline overtake the optimal primary outcome policy. These results indicate the power of distributional estimators to optimize CPTE functions. In contrast, the plug-in baseline policy barely beats an `preference' value of 0.5. This indicates the failure of CATE estimates performance on CPTE outcomes like `win' statistics.

We also experiment with correlated potential outcomes in the semi-synthetic setting. Our experiments show that our models and the baseline models are not sensitive to this violation of the assumptions for \Cref{lem:treatment_effect_modifiers}. See \cref{fig:semi_policy_learning_correlation}.

\subsubsection{Data Collection and Processing}
\label{app:semi_synth_data}

\textbf{Creating the cohorts}
We use the STAR dataset as the starting point for our semi-synthetic experiment as it appears in the AER R package \citep{AER_package}. We select students in kindergarten and create our binary outcome by checking who appears in the 1st grade cohort. Data is missing from the \textit{math} outcome (454 students). The missing \textit{math} outcome was dropped. We split the students into two intervention arms, small classrooms and regular classrooms. The latter arm includes two groups, regular classrooms without aides and those with full-time aides. Although, this may be a violation of SUTVA, the study's findings indicate that the full-time aides were not significant \citep{DVN/SIWH9F_2008, krueger1999experimental}. This resulted in two groups, the small classroom group (1,762 students) and the regular classroom group (4,109 students).

\textbf{Features}
Relevant features used to create the simulated outcomes and for training are: student gender, student ethnicity, teacher gender, teacher ethnicity, years of teaching experience, teacher's highest degree, where the teacher is found in their career (e.g. apprentice, probation, level 1, etc.), school location (rural/inner-city/urban/suburban) and whether the school offered free/reduced-fee lunches. The last variable is an indicator of low-income family background \citep{DVN/SIWH9F_2008}. All the variables are categorical with the exception of the years of teaching experience. Categorical variables were one-hot encoded and missing values were assigned zero to all categories. Missing values in years of experience were assigned the median.

\begin{table}[H]
\centering
\caption{Missing Variables}
\label{tab:missing_values}
\begin{tabular}{lcccc}
\toprule
Variable & Variable Type & Missing Values & Pre-processing \\
\midrule
Math                  & outcome & 454 & Removed \\
Student Ethnicity     & feature & 1   & Assigned zero in all categories \\
Free/Reduced-Fee Lunch& feature & 17  & Assigned zero in all categories \\
Degree                & feature & 21  & Assigned zero in all categories \\
Career Ladder         & feature & 541 & Assigned zero in all categories \\
Teacher Ethnicity     & feature & 60  & Assigned zero in all categories \\
Years of Teaching Experience& feature & 21  & Median \\
\bottomrule
\end{tabular}
\end{table}

\textbf{Outcome Simulation}
An MLP was trained for each intervention arm and outcome. We summarize the hyperparameters and evaluation metrics of each model. Dropout of 10\% was used for all models in training

\begin{table}[H]
\centering
\caption{Summary of deep model models}
\label{tab:deep_models}
\begin{tabular}{lccccccc}
\toprule
Model & Hidden Dimensions & Epochs & Batch Size & Learning Rate & Brier Score & RMSE (std) \\
\midrule
Small -- Dropout      & [64, 64]       & 35  & 128 & $5\times 10^{-4}$ & 0.17 & -- \\
Regular -- Dropout    & [64, 64]       & 34  & 128 & $5\times 10^{-4}$ & 0.19 & -- \\
Small -- Math Score   & [64, 64, 64]   & 100 & 32  & $1\times 10^{-4}$ & -- & 48.5 (49.5) \\
Regular -- Math Score & [64, 64, 64]   & 100 & 32  & $1\times 10^{-4}$ & -- & 45.1 (46.7)\\
\bottomrule
\end{tabular}
\end{table}

After training the models, we sample using MC-dropout \citep{gal2015dropout} with a drop-out rate of 5\%. For each unit, we sample 1000 potential outcomes for each intervention arm. These simulations induce a difficult, non-trivial policy learning problem. Most units' simulated `win' preference function are between 0.4 and 0.6. We compare the optimal policies for the primary and secondary outcome:

\begin{table}[H]
\centering
\caption{Comparing Optimal Policies}
\label{tab:semi_compare_optimal_policies}
\begin{tabular}{lccc}
\toprule
Policy & Treat w/ smaller & Agreement with Win Policy & Win Policy Value\\
\midrule
Win (Dropout, Math) Policy & 77\% & 100\% & 0.59 \\
Dropout Policy & 67\% & 70\% & 0.54\\
Math Score Policy & 75\% & 92\% & 0.58\\
\bottomrule
\end{tabular}
\end{table}

As can be seen from \cref{tab:semi_compare_optimal_policies}, the policy prioritizing the secondary outcome is close to the optimal policy. This is due to the lack of significant treatment effect on the binary outcome. However, this poses a difficulty to learning the optimal policy in the `win' setting. The hierarchical ordering of `winning' forces the policy to first take into consideration the primary outcome.

\textbf{Training \& Evaluation} The training and evaluation are similar to the synthetic simulation. The primary difference is the inclusion of a binary, primary outcome. Binary outcomes are completely parameterized by their Bernoulli parameter, which is the expected value. There is no need for our distributional methods on the primary outcome. Therefore, we perform model selection on the training data and use the same model for all distributional and baseline methods. We found a KNN classifier with 11 neighbors to offer the best brier score (0.17). Our sample size is also limited by the size of the kindergarten cohort. We divide our data into a hold-out set containing 1,500 students (25\%) and our pool of training data is the remaining 4,403 students. We randomly sample training data with $N\in [30, 100, 1000, 4403]$ and train the estimation models with their derivative policy learning models. We repeat this process fifty times for each model and $N$.

\subsubsection{Correlation} \label{app:semi_synth_correlation}
We also explore the impact of correlation between potential outcomes in our semi-synthetic study. To do so, we employ the Iman-Conover method \citep{iman1982distribution} commonly used to induce correlation between variables in simulations \citep{usesIman1, usesIman2, usesIman3}. This method is distribution free and preserves the marginal distributions. Instead of changing the values, it reorders them in such a way that induces an approximation of the desired correlation. We use this method to induce correlation between the potential outcomes of each outcome. Therefore, the marginal potential outcome distributions are identical between the three experiments. For each experiment, we take a single sample from the potential outcomes of each unit and assign the observed outcome based on the unit's treatment arm.

We evaluate our models when there is positive or negative correlation between the potential outcomes, results are available in \Cref{tab:semi_compare_correlation}. For the positive correlation, the desired correlation was set at 0.5 for each outcome. The induced correlation was 0.28 for the binary potential outcomes and 0.47 for the continuous potential outcomes. For the negative correlation, the desired correlation was set at -0.5 for each outcome. The induced correlation was -0.22 for the binary potential outcomes and -0.41 for the continual potential outcomes.

\begin{table}[H]
\centering
\caption{Comparing Optimal Policies}
\label{tab:semi_compare_correlation}
\begin{tabular}{lcccc}
\toprule
Correlation (Induced) & Policy & Treat w/ smaller & Agreement with Win Policy & Win Policy Value\\
\midrule
     & Win (Dropout, Math) Policy & 73\% & 100\% & 0.59 \\
-0.5 (-0.22/-0.41) & Dropout Policy & 67\% & 62\% & 0.53\\
     & Math Score Policy & 75\% & 98\% & 0.59\\
\midrule
     & Win (Dropout, Math) Policy & 77\% & 100\% & 0.59 \\
  0  & Dropout Policy & 67\% & 70\% & 0.54\\
     & Math Score Policy & 75\% & 92\% & 0.58\\
\midrule
     & Win (Dropout, Math) Policy & 74\% & 100\% & 0.65 \\
0.5 (0.28/0.47) & Dropout Policy & 67\% & 63\% & 0.55\\
     & Math Score Policy & 75\% & 99\% & 0.65\\
\bottomrule
\end{tabular}
\end{table}

Similarly to the synthetic experiment, the correlations change the optimal policy slightly (8\%) and the positive correlation raises the value of the optimal policy value. We show in \cref{fig:semi_policy_learning_correlation} that our models are only slightly affected. Because the positive correlation causes the units `win' functions to be more extreme, we notice that as the models better learn the correct policy they gain significant policy value.

\begin{figure*}[t]
  \centering
  \subcaptionbox{DLR: -0.5\label{fig:dlr_semi_neg}}%
    {\includegraphics[width=0.30\textwidth]{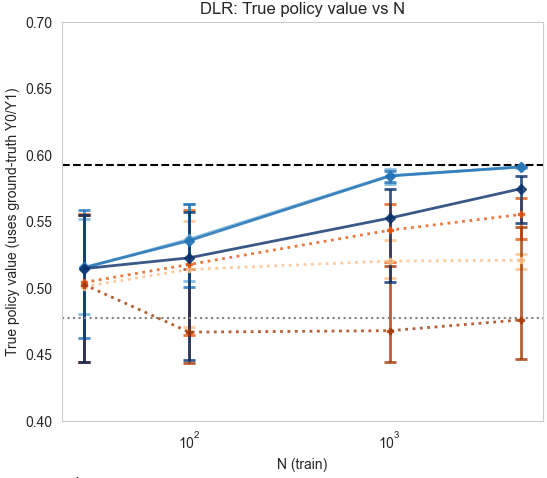}}
  \hfill
  \subcaptionbox{DLR: 0\label{fig:dlr_semi}}%
    {\includegraphics[width=0.30\textwidth]{dlr_semi.png}}
  \hfill
  \subcaptionbox{DLR: 0.5\label{fig:dlr_semi_pos}}%
    {\includegraphics[width=0.30\textwidth]{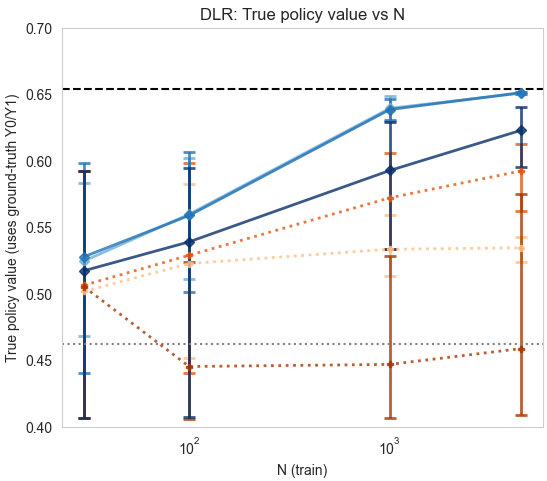}}

  \medskip
  \subcaptionbox{DKNN: -0.5\label{fig:dknn_semi_neg}}%
    {\includegraphics[width=0.30\textwidth]{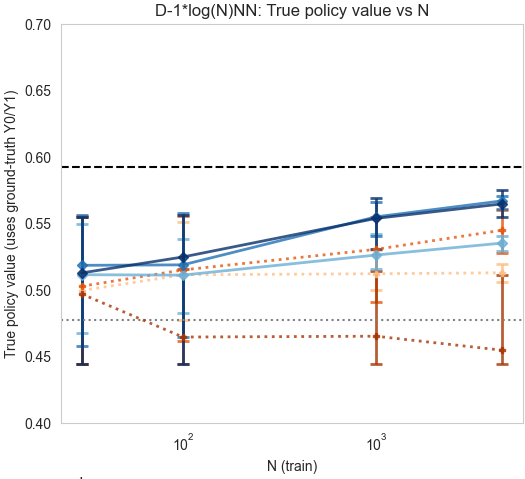}}
  \hfill
  \subcaptionbox{DKNN: 0\label{fig:dknn_semi}}%
    {\includegraphics[width=0.30\textwidth]{dknn_semi.png}}
  \hfill
  \subcaptionbox{DKNN: 0.5\label{fig:dknn_semi_pos}}%
    {\includegraphics[width=0.30\textwidth]{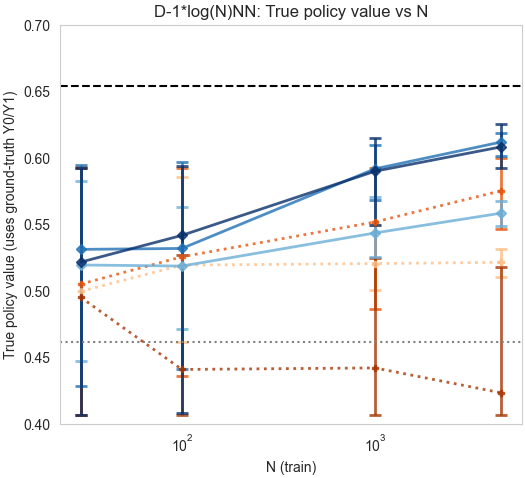}}

    \medskip
  \subcaptionbox{DRF: -0.5\label{fig:drf_semi_neg}}%
    {\includegraphics[width=0.30\textwidth]{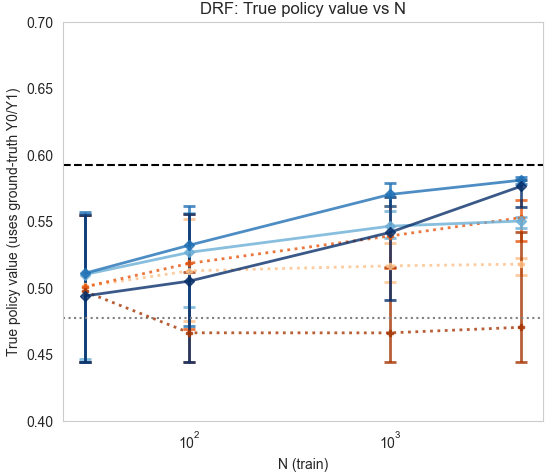}}
  \hfill
  \subcaptionbox{DRF: 0\label{fig:drf_semi}}%
    {\includegraphics[width=0.30\textwidth]{drf_semi.png}}
  \hfill
  \subcaptionbox{DRF: 0.5\label{fig:drf_semi_pos}}%
    {\includegraphics[width=0.30\textwidth]{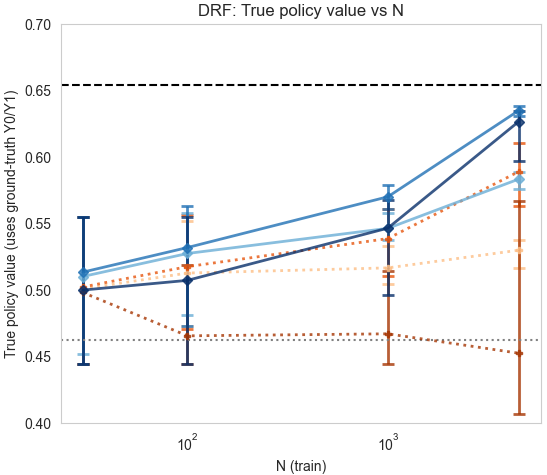}}

  \caption{Policy Learning with correlation between potential outcomes in the semi-synthetic setting. (a–c) Linear Regression; (d–f) DKNN; (g–i) DRF.
    \textbf{Legends} as in \Cref{fig:synthetic_policy_learning_heterogeneous_RCT_uncorr}.}
  \label{fig:semi_policy_learning_correlation}
\end{figure*}

\subsubsection{Impact of the number of neighbors in KNN}
\label{app:nb_neighbors_knn}

\Cref{fig:semi_synth_neighbors} shows the impact of adding more neighbors for policy learning.
As predicted by our theory, the plug-in OTR (\Cref{sec:plug_in_OTR}) requires $k\to\infty$ to converge. Indeed, for $k=1$ the plug-in method performs as poorly as the baseline plug-in (almost as poorly as random guesses), while increasing $k$ leads to higher policy values, and faster with higher $k$.
However, and quite interestingly, this is not the case for policy value optimization methods (\Cref{sec:plug_in_value,sec:EIF_correction}) due to the fact that values are averaged over the whole population. In fact, adding more neighbors does not seem to impact much convergence.

\begin{figure*}[t]
  \centering
  
    \subcaptionbox{$k=1$ neighbor \label{fig:neighbors_k_1}}%
    {\includegraphics[width=0.6\textwidth]{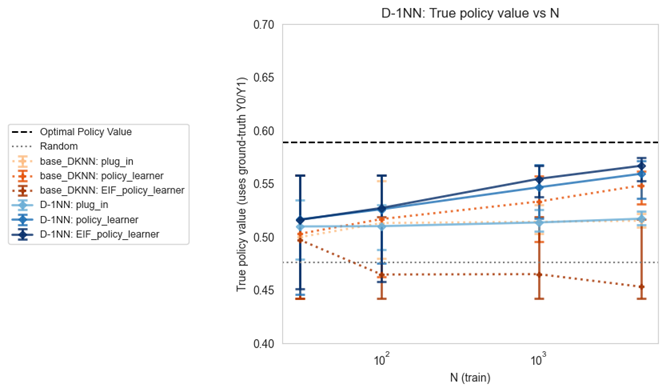}}
  \hfill
  \subcaptionbox{$k=\log(n)$ neighbors\label{fig:neighbors_k_logn}}%
    {\includegraphics[width=0.6\textwidth]{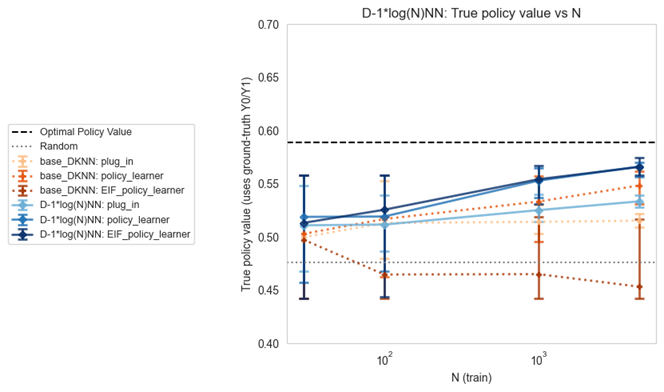}}
  \hfill
  \subcaptionbox{$k=2\log(n)$ neighbors\label{fig:neighbors_k_logn}}%
    {\includegraphics[width=0.6\textwidth]{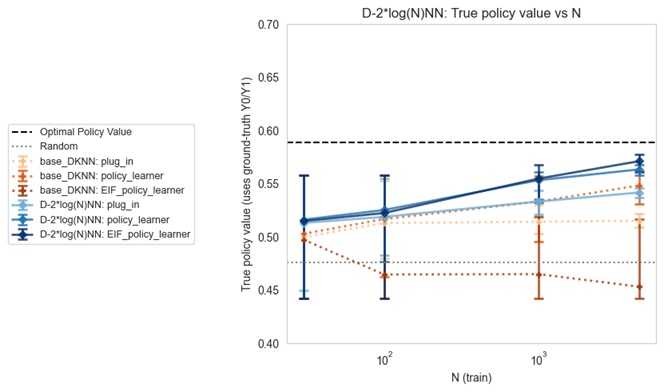}}

  \caption{$k-$NN with $k=1$, $k=\log(n)$ and $k=2\log(n)$ on the semi-synthetic experiment without correlation}
  \label{fig:semi_synth_neighbors}
\end{figure*}

\subsubsection{Policy Agreement - Optimizing `Win' vs. Primary outcome vs. Secondary outcome}
\label{app:semi_synth_policy_agreements_with_win_mean1_mean2}

Lastly, we compare the \textit{policy agreement} \textit{i.e.}, the percentage of agreement between our learned policies and a given oracle policy that is either \textit{(i)} the optimal policy learned with oracle information using the hierarchical outcomes, \textit{(ii)} the optimal policy learned with oracle information that only optimizes the primary outcome, and \textit{(iii)} the optimal policy learned with oracle information that only optimizes the secondary outcome.
Results are reported in \Cref{fig:semi_synth_policy_agreements_hierarchical}.
We show results for the DLR method. These results are also consistent for KNN and DRF.

\begin{figure*}[t]
  \centering
  
    \subcaptionbox{Agreement with oracle policy optimizing by win}%
    {\includegraphics[width=0.6\textwidth]{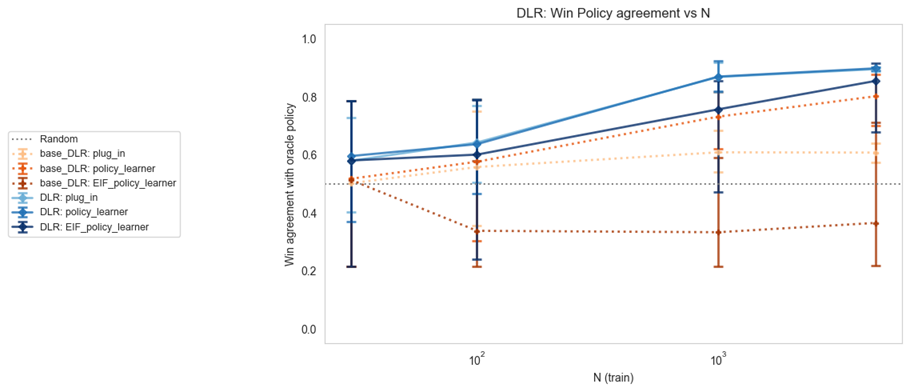}}
  \hfill
  \subcaptionbox{Agreement with oracle policy optimizing by primary outcome}%
    {\includegraphics[width=0.6\textwidth]{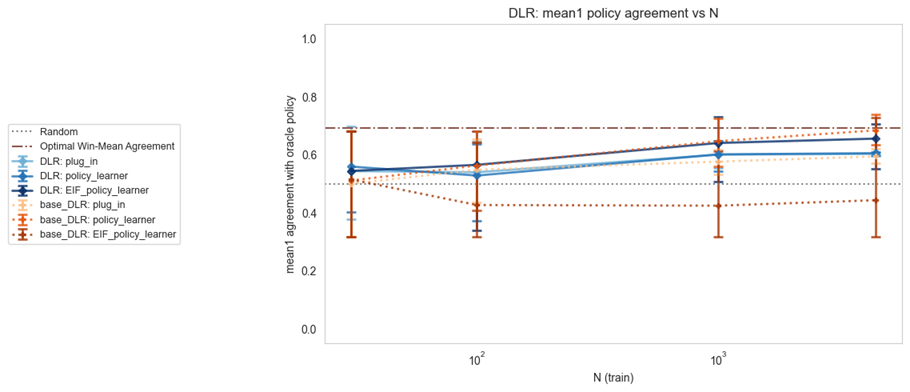}}
  \hfill
  \subcaptionbox{Agreement with oracle policy optimizing by secondary outcome}%
    {\includegraphics[width=0.6\textwidth]{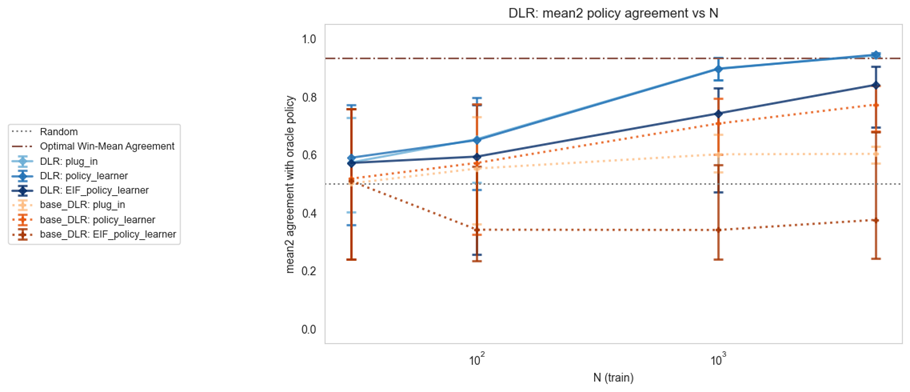}}

  \caption{Learned Policy's agreement with different oracle policies}
  \label{fig:semi_synth_policy_agreements_hierarchical}
\end{figure*}

\end{document}